\DeclareMathOperator*{\argmax}{arg\,max}
\DeclareMathOperator*{\argmin}{arg\,min}
\def\x{{\bf x}}
\def\z{{\bf z}}
\def\A{{\bf A}}
\def\e{{\bf e}}
\def\b{{\bf b}}
\def\B{{\bf B}}
\def\C{{\bf C}}
\def\D{{\bf D}}
\def\y{{\bf y}}
\def\U{{\bf U}}
\def\O{{\bf O}}
\def\v{{\bf v}}
\def\u{{\bf u}}
\def\I{{\bf I}}
\def\w{{\bf w}}
\def\P{{\bf P}}
\def\nul{{\bf 0}}
\def\thetabf{{\boldsymbol \theta}}
\def\Thetabf{{\boldsymbol \Theta}}
\def\etabf{\boldsymbol{\eta}}
\def\Deltabf{\boldsymbol{\Delta}}
\def\mubf{\boldsymbol{\mu}}
\def\Sigmabf{\boldsymbol{\Sigma}}
\def\epsilonbf{\boldsymbol{\epsilon}}
\def\xibf{\boldsymbol{\xi}}
\def\Rset{{\mathbb R}}
\newtheorem{assumption}{Assumption}
\newtheorem{remark}{Remark}
\newtheorem{theorem}{Theorem}
\newtheorem{lemma}{Lemma}
\newtheorem{definition}{Definition}
\begin{document}

%

%

\twocolumn[

\aistatstitle{Meta Learning in Bandits within Shared Affine Subspaces}

\aistatsauthor{ Steven Bilaj \And Sofien Dhouib \And  Setareh Maghsudi }

\aistatsaddress{ Ruhr University Bochum \And  University of Tübingen \And Ruhr University Bochum } ]

\begin{abstract}
  We study the problem of meta-learning several contextual stochastic bandits tasks by leveraging their concentration around a low-dimensional affine subspace, which we learn via online principal component analysis to reduce the expected regret over the encountered bandits. We propose and theoretically analyze two strategies that solve the problem: One based on the principle of optimism in the face of uncertainty and the other via Thompson sampling. Our framework is generic and includes previously proposed approaches as special cases. Besides, the empirical results show that our methods significantly reduce the regret on several bandit tasks.
\end{abstract}

\section{Introduction}

In several real-world applications, such as website design and healthcare, the system recommends an item to a user upon observing some side information depending on the user and the corresponding item. Upon receiving the recommendation, the user sends feedback to the system that captures his interest in the recommendation \citep{glowacka2019bandit,bouneffouf2020survey,Atan23:DDO}. One can interpret the feedback as a reward that characterizes the suitability of the selected recommendation or action with the final objective of maximizing the cumulative payoff over time. At the same time, such a selection might be suboptimal due to the incomplete knowledge of the environment. This \emph{exploration-exploitation} tradeoff, along with the side information, is formalized by the \emph{contextual multi-armed bandit (CMAB)} problem \citep{langford2007epoch,li2010contextual,chu2011contextual,abbasi2011improved,Nourani22:LCS}, a notable extension of the \emph{multi-armed bandit (MAB)} problem \citep{thompson1933likelihood,robbins1952some}.

\par In the applications mentioned above, the tasks often relate to each other despite being different. For instance, subgroups of patients have comparable features. As another example,  holidays or discount periods promote similar interests in the products of an e-commerce website. That observation motivates us to look beyond a single task to uncover a relation between different ones to accelerate learning on newly encountered tasks. That problem, referred to as \emph{meta-learning} or \emph{learning-to-learn (LTL)}, has mainly appeared in the offline learning literature so far \citep{hutter2019automated}. Nevertheless, an emergent body of literature combines LTL and MAB to accelerate learning and reduce the average regret per task \citep{cella2020meta,cella2021multi,Bilaj23:HTB}. In the linear contextual setting, an assumption about the preference vectors captures the relation between the tasks.

\par In this work, we assume that the feature vectors stem from a distribution that concentrates around a low dimensional subspace, \emph{i.e.,} its variance is explained by a limited number of principal components. We propose learning this structure using online \emph{principal component analysis (PCA)}. We then exploit that knowledge to develop two decision-making policies. The first policy relies on the principle of optimism in the face of uncertainty for linear bandits (OFUL) \citep{chu2011contextual,abbasi2011improved}, and the second is a Thompson sampling policy \citep{russo2018tutorial,agrawal:linearTS}. Analytically, we establish per-task regret upper bounds for both strategies that theoretically prove the benefit of learning such a structure. Moreover, our empirical evaluations of our methods using simulated and real-world data sets confirm their benefits.

Our paper is organized as follows. We review meta-learning and related themes for bandit problems in \Cref{sec:related_work}. Then we formulate our problem in \Cref{sec:pb_formulation}. We describe the subspace learning procedure in \Cref{sec:subspace_learning} to use in our proposed algorithms in \Cref{sec:proj_linucb,sec:proj_TS}. Finally, we empirically assess our algorithms in \Cref{sec:experiments}.

\section{Related Work}
\label{sec:related_work}
Learning to learn was first developed for offline learning \citep{thrun1998lifelong,baxter2000model,hutter2019automated} as a sub-field of transfer learning. In this paradigm, one seeks to learn a structure shared by many tasks to generalize to new ones. That structure can be encoded in several ways such as a prior over the task distribution \citep{amit2018meta,rothfuss2021pacoh}, a kernel \citep{aioli2012transferkernellearning}, a common mean around which tasks concentrate \citep{denevi2018learning}, or an approximate low dimensional manifold \citep{jiang2022subspace}, to name a few.

Recently, meta-learning received attention in the online setting \citep{finn2019online}, more precisely, in the case of bandit feedback. The main idea is that the learner interacts sequentially with bandit problems, so the meta-learned shared structure accelerates exploration for upcoming tasks. In this setting, the objective is to improve the regret guarantees compared to those achievable by considering each task separately. The notion of regret can capture such guarantees; nevertheless, it has several definitions depending on the line of work. We distinguish mainly two regret types in a multi-task scenario: \textit{transfer regret} and \textit{meta regret}. The former depends on the number of learned tasks, whereas the latter takes an expectation on a possibly infinite number of tasks.

\par Concerning transfer regret, the goal is to prove sub-linear regret in the number of tasks. If the learner considers each task independently, the total regret over tasks is linear in the number of tasks. Within a task, the expected transfer regret is linear in the number of rounds.
References \citet{cella2021multi} and \citet{cella2022meta} prove that if preference vectors have a low-rank structure, then learning it improves performance. 

In the setting of Bayesian bandits, instead of assuming that the agent knows the true prior over tasks, a recent line of work proposes to learn that distribution. For example, \citet{bastani2019Meta} studies the dynamic pricing problem and proposes a Thompson sampling approach. Reference \citet{kveton2021meta} generalizes the scope of the stochastic MAB problem by developing a meta-Thompson Sampling (meta-TS) algorithm. \cite{basu2021No} improves the guarantees of  \citet{kveton2021meta} via a modification of meta-TS. It also generalizes the core idea to other bandit settings, such as linear and combinatorial bandits. While \cite{basu2021No} and \citet{kveton2021meta} study learning the mean of the tasks with a known covariance matrix, \citet{peleg2022Metalearning} relaxes that assumption. It proposes a general multivariate Gaussian prior learning framework that applies to several prior-update-based bandit algorithms. In the nonlinear contextual bandit case, \citet{kassraie2022MetaLearning,schur2022Lifelong} investigate learning a shared kernel. Concerning the second type of guarantees, \citet{cella2020meta} proves that the regret expectation over a potentially infinite number of tasks shrinks to 0 provided that the ridge regularization parameter is inversely proportional to the tasks' variance, and that said variance approaches 0.
\par Another line of work \citep{boutilier2020differentiable,kveton2020meta,yang2020differentiable} takes inspiration from the policy gradient methods \citep{williams1992simple} and aims to learn hyperparameters of policies to maximize the expected cumulative reward. Besides, meta-learning is also applicable to solve problems in other settings concerning the reward generating mechanism, such as the non-stationary case \citep{azizi2022non}, and more generally the adversarial case \citep{balcan2022meta}.
\par Multi-task learning is a field closely related to meta-learning. The main difference between the two is the following: The former is about simultaneously learning over a finite family of bandit tasks without being concerned with generalization over future ones. That method is applied to solve the unstructured stochastic bandit case \citep{azar2013sequential}, where although the interaction with tasks is sequential, they are finite. Therefore, the agent might encounter the same bandit problem more than once and can leverage the previous experience. Besides, In the case of contextual bandits, a low dimensional structure \citep{cella2021multi,cella_multitask_2022,yang2020impact} or prior knowledge of the relations between tasks \citep{yang2020laplacian} provably reduces the regret
\par In this work, we borrow the concept of low dimensional structure from multi-task learning and leverage it with the concentration of tasks around some space region to improve the regret bound over a family of contextual linear bandits tasks. Indeed, assumptions such as high task concentration around a mean or strictly belonging to a low-dimensional subspace are restrictive. Thus, we aim at relaxing them. Finally, our approach is interpretable as learning an approximation of the covariance matrix of the tasks where the total variance is dominated by the contributions of a few principal components that span the subspace so it tightly relates to \citet{peleg2022Metalearning}; Nevertheless, one of our proposed algorithms does not rely on the prior update.

\section{Problem Formulation}
\label{sec:pb_formulation}
We consider an agent (learner, interchangeably) that sequentially interacts with several contextual bandit tasks. While learning one task over $n$ rounds, at each round $k$, the learner selects an arm $a_k$ from a dynamic set of arms $\mathcal{A}_k$ with associated context vector $\x_{a_k} \in \Rset^d$ satisfying $\norm{\x_{a_k}}\leq 1$. Then it receives a reward $r_k=\x_{a_k}^\top\thetabf^*+\epsilon_k$, where $\thetabf^* \in \Rset^d$ is the true task parameter to estimate. For different tasks, $\thetabf^*$ is independently drawn from a probability distribution $\rho$ over $\Rset^d$ (i.i.d.) with mean $\mubf$. Besides, they are bounded, formally, $\norm{\thetabf^*}\leq V$ for some $V>0$.\footnote{Throughout the paper, $\norm{\cdot}$ denotes the Euclidean norm.}~Moreover, $\epsilon_k$ is the zero-mean $1$-subgaussian noise such that $\{\epsilon_k\}_{k=1}^n$ are independent and identically distributed (i.i.d). 
\par Our main assumption is that the distribution $\rho$ has low variance along certain directions in space which ought to be learnt. \Cref{asm:main_assumption} states this requirement formally. Besides, an illustration of a sampling from such a task distribution in 3 dimensions appears in \Cref{figure:distr_scheme}. Finally, we denote the covariance of $\rho$ as $\Sigmabf$ with ordered eigenvalues $\sigma_1 \geq\cdots\geq\sigma_d$. 
\begin{assumption}
\label{asm:main_assumption}
    There exists an orthogonal projection matrix $\P\in\mathbb{R}^{d\times d}$ with rank $p$ such that:
    \begin{align*}
         \mathrm{Var}_{\rho} &\coloneqq \mathbb{E}_{\thetabf^*\sim \rho}\left[\norm{(\I-\P)\left(\thetabf^*-\mubf\right)}^2\right] \\
         &\ll \mathbb{E}_{\thetabf^*\sim \rho}\left[\norm{\P\left(\thetabf^*-\mubf\right)}^2\right]\\
         &\leq \mathrm{Var}_{\mathrm{max}} \coloneqq \mathbb{E}_{\thetabf^*\sim \rho}\left[\norm{\thetabf^*-\mubf}^2\right],
    \end{align*}
    \begin{equation*}
         \mathrm{Var}_{\rho} \ll \mathbb{E}_{\thetabf^*\sim \rho}\left[\norm{\P\thetabf^*}^2\right].
    \end{equation*}
\end{assumption}
\begin{figure}[ht!]
    \centering
    \includegraphics[scale=0.3
    ,]{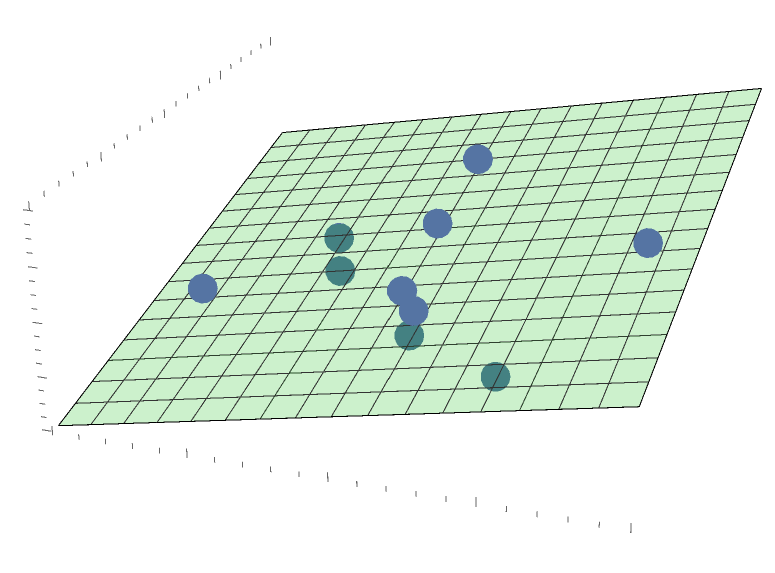}
    \caption{Sample of task parameters (blue points) from a distribution with low variance along one dimension.}
    \label{figure:distr_scheme}
\end{figure}
Our goal is to learn $\P$ and $\mubf$ as well as bound the expected transfer regret $\mathcal{R}(n)$ adapted from \citet{cella2020meta} defined as:
\begin{equation}
    \mathcal{R}(n) = \mathbb{E}_{\thetabf^*\sim\rho}\left[\mathbb{E}\left[\sum_{k=1}^n\left(\x_{a_k^*}-\x_{a_k}\right)^\top\thetabf^*\right]\right],
\end{equation}
where  $a_k^*\coloneqq\argmax_{a\in\mathcal{A}_k}\x_a^\top\thetabf^*$ is the optimal arm at round $k$.
   We propose two different approaches to exploit the knowledge of $\P$. First we present a variation of the standard LinUCB algorithm \citep{abbasi2011improved} by adjusting the regularization term in the regularized least squares optimization problem. Our second approach is a variation of the linear Thompson Sampling algorithm \citep{agrawal:linearTS}, where we adjust the covariance term of the normal distribution from which a task parameter is sampled from after every task according to the learned projection.
\section{Subspace Learning}
\label{sec:subspace_learning}
We use an online PCA version, namely, Candid Covariance-Free Incremental Principal Component Analysis (CCIPCA) \cite{online_pca}, to learn the underlying subspace structure from estimated task parameters. The core idea is to find an approximation of a set of orthonormal vectors that represent the principal components of a vertically concatenated data set $\Thetabf=[(\thetabf(i))-\Bar{\thetabf})^{\top}]_{i\in\{1,...,t\}}$, with $\Bar{\thetabf}\coloneqq\cfrac{1}{t}\sum_{i=1}^t\thetabf(i)$. Vector $\thetabf(i)$ denotes the $i$th task, which was estimated after a total of at least $n$ rounds. Upon finishing a task after $n$ rounds, the agent updates the learned projection matrix. Nevertheless, applying PCA is costly in the long run, whereas an online estimation mitigates the costs while offering sufficient estimations on the learned projection. Starting with a set of orthonormal eigenvectors $\{\u_1,...,\u_d\}$ and their corresponding eigenvalues $\{\sigma_1,...,\sigma_d\}$ based on the covariance matrix $\frac{1}{t}\Thetabf^\top \Thetabf$, we define $\v_j\coloneqq\sigma_j \u_j$ for $j\in\{1,...,d\}$ as the set of scaled principal components. Here we assume $\Bar{\thetabf}=\nul$, for the general case, the task parameters have to be centralized. Each additional task parameter $\thetabf(i)$ adjusts the estimation of $\v_j$ \emph{i.e.} after every round, every principal component $\v_j$ will be updated as
\begin{equation}
    \v_{j,i+1}=\frac{i}{i+1}\v_{j,i}+\frac{1}{i+1}\z_{i+1}\z^\top_{i+1}\frac{\v_{j,i}}{\norm{\v_{j,i}}},
    \label{eq:principal_components}
\end{equation}
with $\z_{i}$ determined to ensure orthogonality of the eigenvector estimations. Formally, to compute $\v_{j,i+1}$ we have:
\begin{equation*}
    \z_{i+1}=\thetabf(i+1)-\sum_{j'=1}^{j-1}\left(\thetabf^\top(i+1)\u_{j',i}\right)\u_{j',i}.
\end{equation*}
CCIPCA is especially beneficial as it is hyperparameter-free. Besides, it estimates the eigenvalues and the corresponding eigenvectors of all principal components. The eigenvalue estimations are essential when choosing the rank $p$ of the projection, which is generally unknown. The vectors $\u_i$ with the $p$ highest values $\sigma_i$ are selected as principal components. We define their horizontal concatenation as $\U=[\u_j]_{j\in\{1,...,p\}} \in \Rset^{d \times p}$.

\begin{remark}
    The choice of $p$ depends on the respective eigenvalues, a common choice would be to maximize the eigengap, thus $p=\argmax_{p'}\sigma_{p'}-\sigma_{p'+1}$.
\end{remark}

The projection matrix $\P$ with rank $p$ as well as the orthogonal projection $\P^{\perp}$ with rank $q=d-p$ can then be constructed using of the principal components as
\begin{equation}
    \P = \U\U^\top, \quad  \P^{\perp}=\I-\P
    \label{eq:construct_proj}
\end{equation}
We will use the learned projections to exploit the low dimensional subspace structure in both LinUCB and Thompson sampling setting.
\section{Projection Meta learning with LinUCB}
\label{sec:proj_linucb}
In this section, we present our contextual bandit algorithms based on LinUCB.
\subsection{Basics of LinUCB}
In classic LinUCB, at each round $k$, the agent uses the collection of previously selected actions $\D_k=[\x^\top_{a_i}]_{i\in\{0,...,k-1\}}$ and the corresponding rewards $\y_k=[r_i]_{i\in\{0,...,k-1\}}$ to estimate the task parameter $\thetabf_k$ by solving the following regularized least squares optimization problem:
\begin{equation}
\label{eq:theta}
    \thetabf_k=\argmin_{\thetabf \in \Rset^d} \norm{\D_k\thetabf-\y_k}^2+\lambda\norm{\thetabf}^2,
\end{equation}
with $\lambda>0$ being the regularization parameter. The solution of \eqref{eq:theta} is the \emph{ridge estimator}. Given that, the learner selects an action that maximizes the UCB index
\begin{equation}
    \mathrm{UCB}(a) = \x_a^\top\thetabf_k+\gamma_k \norm{\x_a}_{\A_k^{-1}}, \footnote{Throughout the paper, $\norm{\cdot}_{\A}$ denotes the weighted norm: $\norm{\x}_{\A}=\sqrt{\x^{\top}\A\x}$.}
    \label{eq:ucb}
\end{equation}
with $\A_{k} \coloneqq \lambda \I + \D^\top_k\D_k$, $\thetabf_k=\A_k^{-1}\D^\top_k\y_k$ and $\gamma_k>0$ as an upper bound on the confidence set radius proposed in \citet{abbasi2011improved}. The additional term scaling is essential for exploration as $\norm{\x}_{\A_k^{-1}}$ is maximized for context vectors that have the least correlation with already explored arms.
\subsection{LinUCB with Projection Bias}
In our first proposal, we enhance the LinUCB by including the knowledge of the projection matrix $\hat{\P}$. The agent learns $\hat{\P}$ by an online PCA algorithm using the parameters of $t$ already learned tasks. To enforce the knowledge of the affine subspace during learning, we formulate the following optimization problem for a given task, where we define $\hat{\thetabf}_k$ as the minimizer over $\thetabf\in\Rset^d$ in the following objective:
\begin{equation}
     \norm{\D_k\thetabf-\y_k}^2+\lambda_1\norm{\hat{\P}^{\perp}(\thetabf-\Bar{\thetabf})}^2 + \lambda_2 \norm{\hat{\P}\thetabf}^2,
    \label{eq:projecten_linucb_optimization}
\end{equation}
with $\hat{\P}^{\perp}\coloneqq\I-\hat{\P}$, $\lambda_1 > 0$ and $ \lambda_2 > 0$. Besides,
\\$\Bar{\thetabf}\coloneqq\cfrac{1}{t}\sum_{i=1}^t\thetabf(i)$ is the mean of the ridge regression estimators of the $t$ previous tasks. We justify the explicit choice of the regularization parameters in the analysis. Problem \eqref{eq:projecten_linucb_optimization} has a closed form solution given by
\begin{equation}
    \hat{\thetabf}_k = (\D_k^\top\D_k+\lambda_1\hat{\P}^{\perp}+ \lambda_2\hat{\P})^{-1}\left(\D_k^\top\y_k+\lambda_1\w\right),
    \label{eq:closedform}
\end{equation}
with $\w\coloneqq\hat{\P}^{\perp}\Bar{\thetabf}$. The second regularization term in \cref{eq:projecten_linucb_optimization} scaling with $\lambda_2$ is necessary so that our closed form solution (\ref{eq:closedform}) is well defined i.e., it enables us to determine the inverse of 
\begin{equation}
\B_k\coloneqq\D_k^\top\D_k+\lambda_1\hat{\P}^{\perp} + \lambda_2\hat{\P}^{\perp}. 
  \label{eq:Bdefinition}
\end{equation}
The case $\hat{\P}=\I$, which implies that all tasks are highly concentrated around the vector $\w=\Bar{\thetabf}$, would correspond to the setting of \cite{cella2020meta}.

Action selection is based on the principle of optimism in the face of uncertainty (OFUL), we propose an alternative UCB index by estimating the difference between mean reward $r$ and estimated reward $\hat{r}$:
\begin{align*}
    |\hat{r}-\mathbb{E}(r|\x)|&=|\x^\top(\hat{\thetabf}_k-\thetabf^*)|\\
    &\leq \norm{\hat{\thetabf}_k-\thetabf^*}_{\B_k}\norm{\x}_{\B_k^{-1}} \leq \gamma_k \norm{\x}_{\B_k^{-1}},
\end{align*}
with $\gamma_k\geq\norm{\hat{\thetabf}_k-\thetabf^*}_{\B_k}$. We provide an upper bound on the confidence set of the current estimation $\hat{\thetabf}_k$ in \Cref{sec:analysis}. The UCB function is then given by
\begin{equation}
    \mathrm{UCB}(a)=\x^\top_a\hat{\thetabf}_k+\gamma_k \norm{\x_a}_{\B_k^{-1}}
    \label{eq:proj_ucb}.
\end{equation}
%
\subsection{Analysis}\label{sec:analysis}
We start by providing a confidence set bound on the current estimation of our task parameter. We make use of an adapted concentration inequality provided by \cite{abbasi2011improved} in the following lemma.
\begin{lemma}[Self-normalized bound for vector-valued martingales]
    Let $\tau$ be a stopping time with respect to a filtration $\{\mathcal{F}_k\}_{k=1}^{\infty}$ and define $\etabf_k=\D^\top_k\epsilonbf$, with $\epsilonbf \in \Rset^k$ as subgaussian noise vector. Then, for every $\delta\in(0,1)$, with probability at least $1-\delta$ we have 
    \begin{equation*}
        \norm{\etabf_k}^2_{\B^{-1}_k}\leq \log(\frac{\det(\B_k)}{\delta^2\lambda_1^q\lambda_2^p}).
    \end{equation*}
    \label{lemma:self-concentrated-bound}
\end{lemma}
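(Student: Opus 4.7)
The plan is to recognize this as a direct corollary of the standard self-normalized martingale concentration of \citet{abbasi2011improved} (their Theorem 1), specialized to the regularizer matrix $\V \coloneqq \lambda_1 \hat{\P}^{\perp} + \lambda_2 \hat{\P}$ that plays the role of the initial matrix in the construction of $\B_k$. Indeed, $\B_k = \V + \D_k^\top \D_k$, and since the noise is $1$-subgaussian, the generic bound applies verbatim to the martingale $\etabf_k = \D_k^\top \epsilonbf$ evaluated at the stopping time $\tau$. My first step is therefore to invoke that theorem to obtain, with probability at least $1-\delta$,
\begin{equation*}
    \norm{\etabf_k}^2_{\B_k^{-1}} \;\leq\; 2\log\!\left(\frac{\det(\B_k)^{1/2}\det(\V)^{-1/2}}{\delta}\right).
\end{equation*}

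The only piece that differs from the textbook statement is the evaluation of $\det(\V)$. Here I would exploit the fact that $\hat{\P}$ and $\hat{\P}^{\perp}$ are mutually orthogonal projections of ranks $p$ and $q=d-p$ respectively, so they are simultaneously diagonalizable in an orthonormal eigenbasis. In that basis $\V$ is diagonal with $p$ entries equal to $\lambda_2$ and $q$ entries equal to $\lambda_1$, giving
\begin{equation*}
    \det(\V) \;=\; \lambda_1^{q}\,\lambda_2^{p}.
\end{equation*}

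Substituting this into the bound and pulling the square roots into the logarithm as $\tfrac{1}{2}\log(\cdot)^2$ yields
\begin{equation*}
    \norm{\etabf_k}^2_{\B_k^{-1}} \;\leq\; \log\!\left(\frac{\det(\B_k)}{\delta^2\,\lambda_1^{q}\,\lambda_2^{p}}\right),
\end{equation*}
which is exactly the claim.

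I do not expect a genuine obstacle here, since the generic self-normalized bound is proved for an arbitrary positive definite regularizer $\V$ (the standard proof builds a supermartingale from the MGF of the subgaussian noise and applies the method of mixtures with a Gaussian prior of covariance $\V^{-1}$; none of those steps use $\V = \lambda \I$). The only thing worth double-checking carefully is that $\V$ is indeed positive definite here, which holds because $\lambda_1,\lambda_2>0$ and $\hat{\P} + \hat{\P}^{\perp} = \I$, so $\V \succeq \min(\lambda_1,\lambda_2)\,\I \succ 0$; this also justifies that $\B_k$ is invertible, so the weighted norm $\norm{\cdot}_{\B_k^{-1}}$ is well defined. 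Everything else is a one-line determinant computation.
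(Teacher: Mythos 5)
Your proposal is correct and follows essentially the same route as the paper: invoke the self-normalized martingale bound of \citet{abbasi2011improved} with the general regularizer $\lambda_1\hat{\P}^{\perp}+\lambda_2\hat{\P}$ in place of $\lambda\I$, then compute $\det(\lambda_1\hat{\P}^{\perp}+\lambda_2\hat{\P})=\lambda_1^q\lambda_2^p$ from the common eigenbasis of the two complementary projections. The paper reaches the same determinant value via the eigenvalue multiplicities (dimensions of the relevant nullspaces), which is just a different phrasing of your simultaneous-diagonalization argument.
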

To emphasize the dimension of the subspace and the residual, in the lemma below, we bound the ratio of determinants in Lemma \ref{lemma:self-concentrated-bound}. 
\begin{lemma}
    Let $\lambda_1,\lambda_2 >0$ and $\B$ be defined as in \cref{eq:Bdefinition}. Then 
    \begin{align*}
        &\log(\frac{\det(\B_k)}{\det(\lambda_1\hat{\P}^{\perp}+\lambda_2\hat{\P})}) \leq \\
        &S_{k}^{\lambda_1,\lambda_2} \coloneqq p \log(1+\frac{k}{p\lambda_2})+q \log(1+\frac{k}{q\lambda_1}).
    \end{align*}
    \label{lemma:det_bound}
\end{lemma}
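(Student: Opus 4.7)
My plan is to reduce the $d$-dimensional log-determinant to a sum of contributions from the $p$-dimensional range of $\hat{\P}$ and its $q$-dimensional orthogonal complement, then apply AM--GM inside each block.

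First, I would pick an orthonormal basis $[\U, \V]$ of $\Rset^d$ with $\U \in \Rset^{d \times p}$ spanning the range of $\hat{\P}$ (so that $\hat{\P} = \U\U^\top$) and $\V \in \Rset^{d \times q}$ spanning the range of $\hat{\P}^\perp$. In this basis, $\lambda_1 \hat{\P}^\perp + \lambda_2 \hat{\P}$ becomes block-diagonal equal to $\mathrm{diag}(\lambda_2 \I_p, \lambda_1 \I_q)$, with determinant $\lambda_2^p \lambda_1^q$. The Gram matrix $\D_k^\top \D_k$ acquires diagonal blocks $\A_{11} \coloneqq \U^\top \D_k^\top \D_k \U$ of size $p \times p$ and $\A_{22} \coloneqq \V^\top \D_k^\top \D_k \V$ of size $q \times q$. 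Hence $\B_k$ corresponds to a positive definite block matrix whose diagonal blocks are $\A_{11} + \lambda_2 \I_p$ and $\A_{22} + \lambda_1 \I_q$.

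Next, I would invoke Fischer's inequality for block PSD matrices, yielding $\det(\B_k) \leq \det(\A_{11} + \lambda_2 \I_p) \cdot \det(\A_{22} + \lambda_1 \I_q)$. Dividing by $\det(\lambda_1 \hat{\P}^\perp + \lambda_2 \hat{\P}) = \lambda_2^p \lambda_1^q$ bounds the target ratio by $\det(\I_p + \A_{11}/\lambda_2) \cdot \det(\I_q + \A_{22}/\lambda_1)$.

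Finally, for each factor I would use AM--GM applied to the nonnegative eigenvalues: for any PSD matrix $\A \in \Rset^{m \times m}$, $\det(\I_m + \A) \leq (1 + \mathrm{tr}(\A)/m)^m$. The trace of $\A_{11}$ equals $\mathrm{tr}(\D_k \hat{\P} \D_k^\top) = \sum_{i=0}^{k-1} \norm{\hat{\P} \x_{a_i}}^2 \leq \sum_{i=0}^{k-1} \norm{\x_{a_i}}^2 \leq k$, using $\norm{\x_{a_i}} \leq 1$ and the contraction property of the projection; analogously $\mathrm{tr}(\A_{22}) \leq k$. Substituting these into the AM--GM estimates and taking logarithms yields the claimed bound. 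The only slightly nontrivial ingredient is Fischer's inequality; the rest is routine bookkeeping, so I do not anticipate any real obstacle.
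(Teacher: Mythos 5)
Your proof is correct and, in substance, a more careful version of the paper's argument. The paper works with the singular values $\lambda'_i$ of $\D_k^\top\D_k$ and asserts directly that the log-determinant ratio is at most $\sum_{i=1}^p\log(1+\lambda'_i/\lambda_2)+\sum_{i=p+1}^d\log(1+\lambda'_i/\lambda_1)$, then applies Jensen's inequality within each group and bounds the total trace by $k$. That first inequality silently assumes the spectrum of $\B_k$ splits along the ranges of $\hat{\P}$ and $\hat{\P}^{\perp}$, which is not automatic because $\D_k^\top\D_k$ need not commute with $\hat{\P}$; your invocation of Fischer's inequality is precisely the ingredient that rigorously discards the off-diagonal blocks $\U^\top\D_k^\top\D_k\V$ and licenses such a split. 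Your AM--GM step $\det(\I_m+\A)\le\left(1+\tr(\A)/m\right)^m$ is equivalent to the paper's Jensen step, and your block-wise trace bounds $\tr(\A_{11})=\sum_i\norm{\hat{\P}\x_{a_i}}^2\le k$ (and likewise for $\A_{22}$) are slightly sharper than the paper's bound on the full trace of $\D_k^\top\D_k$, though both yield the same final estimate. In short: the skeleton is the same (split by subspace, concavity of $\log$, trace at most $k$), but your Fischer step supplies a justification that the paper's first inequality leaves implicit.
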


In the following lemma, we formulate the confidence set bound in our setting.
\begin{lemma}
    At round $k$, and with probability of at least $1-\delta$, the confidence set bound for $\hat{\thetabf}_k$ is given by
    \begin{align*}
        \norm{\hat{\thetabf}_k-\thetabf^*}_{\B_k} \leq \sqrt{S_{k}^{\lambda_1,\lambda_2}+\log(\frac{1}{\delta^2})} +\sqrt{\lambda_2}V+\frac{\lambda_1}{\sqrt{\lambda_2}}W,
    \label{confidence_set}
    \end{align*}
    where $W\coloneqq\norm{\hat{\P}^{\perp}(\thetabf^*-\Bar{\thetabf})}$. 
\label{lemma:confidence_set_bound}
\end{lemma}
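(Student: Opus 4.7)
The plan is to decompose $\hat{\thetabf}_k - \thetabf^*$ into a noise term and two bias terms, then bound each in the $\B_k$-norm. First I would substitute the reward model $\y_k = \D_k \thetabf^* + \epsilonbf$ into the closed-form expression \eqref{eq:closedform}, and then use the identity $\D_k^\top \D_k = \B_k - \lambda_1 \hat{\P}^\perp - \lambda_2 \hat{\P}$ to isolate $\B_k \thetabf^*$. Since $\w = \hat{\P}^\perp \bar{\thetabf}$, the cross term $\lambda_1(\w - \hat{\P}^\perp \thetabf^*)$ collapses to $-\lambda_1 \hat{\P}^\perp(\thetabf^* - \bar{\thetabf})$, producing the decomposition
\[
\hat{\thetabf}_k - \thetabf^* = \B_k^{-1}\etabf_k - \lambda_1 \B_k^{-1}\hat{\P}^\perp(\thetabf^* - \bar{\thetabf}) - \lambda_2 \B_k^{-1}\hat{\P}\thetabf^*.
\]

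Next, I would take the $\B_k$-norm, apply the triangle inequality, and use the identity $\norm{\B_k^{-1}\x}_{\B_k} = \norm{\x}_{\B_k^{-1}}$ to reduce the problem to bounding three $\B_k^{-1}$-weighted norms. For the stochastic noise term, combining \Cref{lemma:self-concentrated-bound} with \Cref{lemma:det_bound}, together with the observation that $\det(\lambda_1 \hat{\P}^\perp + \lambda_2 \hat{\P}) = \lambda_1^q \lambda_2^p$ (eigenvalues $\lambda_1$ with multiplicity $q$ and $\lambda_2$ with multiplicity $p$), directly yields $\norm{\etabf_k}_{\B_k^{-1}} \leq \sqrt{S_k^{\lambda_1,\lambda_2} + \log(1/\delta^2)}$ on an event of probability at least $1-\delta$.

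For the two bias terms, the strategy is to exploit $\B_k \succeq \lambda_1 \hat{\P}^\perp + \lambda_2 \hat{\P} \succeq \lambda_2 \I$, which is natural under the implicit assumption $\lambda_2 \leq \lambda_1$ (one typically wants to regularize the orthogonal direction more strongly). This gives the crude but useful estimate $\norm{\v}_{\B_k^{-1}} \leq \norm{\v}/\sqrt{\lambda_2}$ for any $\v$. Combining it with $\norm{\hat{\P}\thetabf^*} \leq \norm{\thetabf^*} \leq V$ (since $\hat{\P}$ is non-expansive as an orthogonal projection) and $\norm{\hat{\P}^\perp(\thetabf^* - \bar{\thetabf})} = W$, the two bias contributions become $\sqrt{\lambda_2}\,V$ and $(\lambda_1/\sqrt{\lambda_2})\,W$, completing the claimed inequality after a final triangle-inequality step.

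The main obstacle is the algebraic bookkeeping in the decomposition step: one has to correctly isolate $\B_k \thetabf^*$ so that the $\lambda_1 \w$ term from the closed form pairs cleanly with $\lambda_1 \hat{\P}^\perp \thetabf^*$ and yields the $\bar{\thetabf}$-centered bias, rather than an unrelated $\w$-centered one. A secondary subtlety worth flagging is that the isotropic bound $\B_k^{-1} \preceq \lambda_2^{-1}\I$ used above is looser than the block bound $\B_k^{-1} \preceq \lambda_1^{-1}\hat{\P}^\perp + \lambda_2^{-1}\hat{\P}$; the former is what produces exactly the stated coefficient $\lambda_1/\sqrt{\lambda_2}$, whereas the sharper block version would give a $\sqrt{\lambda_1}$ factor instead. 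This looseness is harmless in the regime $\lambda_2 \leq \lambda_1$ and simplifies the downstream regret analysis.
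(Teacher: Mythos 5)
Your proposal is correct and follows essentially the same route as the paper's own proof: the same decomposition $\hat{\thetabf}_k-\thetabf^*=\B_k^{-1}\etabf_k-\lambda_1\B_k^{-1}\hat{\P}^{\perp}(\thetabf^*-\Bar{\thetabf})-\lambda_2\B_k^{-1}\hat{\P}\thetabf^*$, the same use of \Cref{lemma:self-concentrated-bound} and \Cref{lemma:det_bound} for the noise term, and the same $\B_k\succeq\lambda_2\I$ bound for the bias terms. Your remark that the isotropic bound is what yields the $\lambda_1/\sqrt{\lambda_2}$ coefficient (versus $\sqrt{\lambda_1}$ from the block bound) is accurate and matches the paper's choice.
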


Ideally, we want to show that the confidence set bound is tightened with the knowledge of the shared subspace and the corresponding projection matrix. That can be observed in the regularization terms scaling with $\norm{\hat{\P}^{\perp}\left(\Bar{\thetabf}-\thetabf^*\right)}$, which is small with high probability due to \Cref{asm:main_assumption}. The second regularization based term scales with $\norm{\hat{\P}\thetabf^*}$ and $\lambda_2$ and guarantees our problem to be well posed. In addition, the choice of $\lambda_2 \ll \lambda_1$ enforces leveraging our assumption on $\rho$.

Before establishing an upper bound for the expected transfer regret, we deliver an error estimation on our projection matrices. For this purpose, we use the eigengap $\Delta_{\sigma}\coloneqq\sigma_p-\sigma_{p+1}$, which is assumed to be positive, where $p$ is the dimension of the low dimensional subspace. A projection $\P$ depends on the number $p$ of selected eigenvectors, thus it can be assigned a specific eigengap. The following results shows the benefit of large eigengaps.
\begin{lemma}
    Let $\Bar{\thetabf}=\frac{1}{t}\sum_{i=1}^t\thetabf(i)$ be the empirical mean of $L_2$ regularized task parameter estimations $\thetabf(i)$ of true parameters $\thetabf^*(i) \sim \rho$. Assume that each $\thetabf(i)$ was estimated after the selection of at least $n$ arms. Let $\hat{\P}^{\perp}$ and $\Delta_{\sigma}>0$ be the estimation of $\P^\perp$ and the eigengap of $\Sigmabf$, respectively. We have
    \begin{align*}
&\mathbb{E}_{\thetabf^*\sim\rho}\left[\mathbb{E}\left[\norm{\hat{\P}^{\perp}\left(\Bar{\thetabf}-\thetabf^*\right)}\right]\right]= \\&\mathcal{O}\left(\sqrt{\mathrm{Var}_{\rho}+b^2\beta_{d}^2+\epsilon_{\mubf}^2+c\epsilon_{\Sigmabf}^2}\right),
    \end{align*}
    \label{projection_error}
with $\epsilon_{\mubf}=\frac{2\log(2t)}{t}+\sqrt{\frac{2\log(2t)\mathrm{Var}_{\rho}}{t}}$, $\epsilon_{\Sigmabf}^2=\frac{C\log(2t)}{t}$, $C$ is an absolute constant, $b=1+64\sqrt{2p}\frac{V^2}{\Delta_{\sigma}}$, $c=\frac{128 p V^2}{\Delta_{\sigma}^2}$, $\beta_{d}=\frac{1}{\sqrt{\lambda_{\min}}}\left[\sqrt{d\log(1+\frac{n^2V^2}{d})+2}+\sqrt{\frac{1}{n}}\right]$ and $\lambda_{\min}$ is the smallest of the minimal eigenvalues of matrices $\A_n$. 
\end{lemma}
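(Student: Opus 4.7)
The plan is to decompose the error via a two-layer triangle inequality: first replace $\thetabf^*$ by $\mubf$ using $\bar{\thetabf}-\thetabf^* = (\bar{\thetabf}-\mubf) + (\mubf-\thetabf^*)$, and then, wherever the estimated projector $\hat{\P}^\perp$ appears, swap it for the true $\P^\perp$ via $\hat{\P}^\perp = \P^\perp + (\hat{\P}^\perp - \P^\perp)$. After squaring and applying $(a+b)^2 \le 2a^2+2b^2$ twice, the left-hand side is controlled by the sum of four easily interpretable pieces: (i) $\|\P^\perp(\mubf-\thetabf^*)\|^2$, whose expectation over $\rho$ is exactly $\mathrm{Var}_{\rho}$; (ii) $\|(\hat{\P}^\perp-\P^\perp)(\mubf-\thetabf^*)\|^2$, bounded by $\|\hat{\P}^\perp-\P^\perp\|_{\mathrm{op}}^2\,\mathrm{Var}_{\max}$; (iii) $\|\P^\perp(\bar{\thetabf}-\mubf)\|^2$; and (iv) $\|(\hat{\P}^\perp-\P^\perp)(\bar{\thetabf}-\mubf)\|^2$. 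The final $\mathcal{O}(\sqrt{\cdot})$ bound will follow by taking the square root and using Jensen's inequality to move from $\mathbb{E}[\|\cdot\|]$ to $\sqrt{\mathbb{E}[\|\cdot\|^2]}$.

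For pieces (iii) and (iv), I would further split $\bar{\thetabf}-\mubf = \frac{1}{t}\sum_i(\thetabf(i)-\thetabf^*(i)) + \frac{1}{t}\sum_i(\thetabf^*(i)-\mubf)$. The first sum is a controlled average of ridge-regression estimation errors after $n$ pulls per task, for which the standard LinUCB confidence bound yields $\|\thetabf(i)-\thetabf^*(i)\| \lesssim \beta_d$ with the stated expression for $\beta_d$; this produces the $b^2\beta_d^2$ term (the factor $b$ absorbs an additional contribution that appears when this error is passed through $\hat{\P}^\perp-\P^\perp$, since the spectral-perturbation constant involves $\sqrt{p}\,V^2/\Delta_\sigma$). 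The second sum is an empirical-mean deviation from $\mubf$ of i.i.d.\ variables with variance $\mathrm{Var}_{\max}$, and a Bernstein-type concentration gives exactly $\epsilon_{\mubf} = \tfrac{2\log(2t)}{t} + \sqrt{\tfrac{2\log(2t)\,\mathrm{Var}_{\rho}}{t}}$.

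The heart of the argument, and the main obstacle, is bounding $\|\hat{\P}^\perp-\P^\perp\|_{\mathrm{op}}$. Here I would invoke the Davis--Kahan $\sin\Theta$ theorem: for the two orthogonal projectors onto the top-$p$ eigenspaces of $\hat{\Sigmabf}$ and $\Sigmabf$ respectively, one has $\|\hat{\P}-\P\|_F \lesssim \sqrt{p}\,\|\hat{\Sigmabf}-\Sigmabf\|_{\mathrm{op}}/\Delta_\sigma$, which matches the $\sqrt{p}/\Delta_\sigma$ scaling hidden in both $b$ and $c$. I then need to bound $\|\hat{\Sigmabf}-\Sigmabf\|_{\mathrm{op}}$, where $\hat{\Sigmabf}$ is the CCIPCA-maintained sample covariance of the noisy estimates $\thetabf(i)$. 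Splitting $\hat{\Sigmabf}-\Sigmabf$ into (a) the sample covariance of $\thetabf^*(i)$'s minus $\Sigmabf$, which gives the operator-norm concentration bound $\epsilon_{\Sigmabf}^2 = C\log(2t)/t$ for bounded vectors, and (b) a residual due to replacing $\thetabf^*(i)$ by $\thetabf(i)$, which contributes the $\beta_d$-scaled correction, yields the stated coefficients $c$ and $b$.

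The potentially tricky points are: making the operator-norm-to-Frobenius-norm translation tight enough to recover exactly $128p/\Delta_\sigma^2$ rather than a weaker $d/\Delta_\sigma^2$ factor; ensuring that the CCIPCA iterates faithfully track the offline sample top-$p$ eigenspace, which I would either assume or justify by noting that CCIPCA's fixed points coincide with the true principal subspace so that the Davis--Kahan estimate transfers; and keeping the cross terms between mean-estimation error and projector-estimation error organized so that no term degrades to the naive $\mathrm{Var}_{\max}$ rate. With those issues handled, collecting coefficients and applying $\sqrt{\cdot}$-Jensen produces the advertised $\mathcal{O}\!\left(\sqrt{\mathrm{Var}_{\rho}+b^2\beta_d^2+\epsilon_{\mubf}^2+c\,\epsilon_{\Sigmabf}^2}\right)$ bound.
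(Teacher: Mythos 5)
Your proposal is correct and follows essentially the same route as the paper's proof: the same triangle-inequality decomposition into a true-variance term, a mean-estimation term (split into concentration of $\Bar{\thetabf}^*$ around $\mubf$ and the ridge errors $\thetabf(i)-\thetabf^*(i)$), and a projector-error term handled by Davis--Kahan combined with covariance concentration plus a $\beta_d$-scaled correction for using noisy estimates. The only differences are organizational — you square and keep four pieces where the paper bounds $\norm{\hat{\P}^{\perp}-\P^{\perp}}\norm{\thetabf^*-\Bar{\thetabf}}\leq 2V\norm{\hat{\P}-\P}$ directly before applying Davis--Kahan, which is how the stated constants $b$ and $c$ with the $V^2/\Delta_{\sigma}$ scaling arise.
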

In what follows, we define
\begin{equation*}
    Y\coloneqq\mathrm{Var}_{\rho}+\beta_{d}^2\left(1+64\sqrt{2p}\frac{V^2}{\Delta_{\sigma}}\right)^2+\epsilon_{\mubf}^2+\frac{128 p \epsilon_{\Sigmabf}^2V^2}{\Delta_{\sigma}^2}.
    \label{eq:Y}
\end{equation*}
The mean concentration error is $\epsilon_{\mubf}$ that converges to zero for a sufficiently large number of tasks. Besides, $\epsilon_{\Sigmabf}$ gives us the concentration error bound of the covariance estimated by the true task parameters $\thetabf^*$ and converges to zero. The bound depends heavily on the eigengap of the true covariance. Larger eigengaps reduce the expected error term and increase the reliability of the projection estimation. For the analysis, we assume $\Delta_{\sigma}>0$ for the chosen value of $p$. By assumption, $\mathrm{Var}_{\rho}$ is relatively small. Thus, the complete term is mostly dominated by $\beta_{d}^2$, which is an upper bound on the mean squared error (MSE) of the ridge estimator from the standard linUCB case. By selecting $\lambda\sim\frac{1}{n}$, the MSE of the ridge estimator converges to the estimator's variance. This variance, in turn, scales with the subgaussian noise term added on the rewards and also depends on the singular values of the respective data covariance matrix $\D^\top\D$. Ideally, we would prefer non-zero singular values. That implies that the set of context vectors yield information along any dimension, which would minimize the variance of the ridge estimator; Nevertheless, our setting does not guarantee this. 

We establish an upper bound on the transfer regret in the following theorem.
\begin{theorem}
     Assuming that $\P$ and $\mubf$ are known, the expected transfer regret of the projected LinUCB algorithm is upper bounded by
    \begin{align*}
        &\mathcal{R}(n)=\\
        &\mathcal{O}\left(\sqrt{n}\left(p\log(1+\frac{nV^2}{p})+q\log(1+\frac{n\sqrt{\mathrm{Var}_\rho}}{q})\right)\right).
    \end{align*}
    If the assumptions of Lemma \ref{projection_error} hold, the expected transfer regret is upper bounded by
    \begin{align*}
        &\mathcal{R}(n)=\\
        &\mathcal{O}\left(\sqrt{n}\left(p\log(1+\frac{nV^2}{p})+q\log(1+\frac{n\sqrt{Y}}{q})\right)\right).
    \end{align*}
    \label{theorem:proj_ucb_bound}
\end{theorem}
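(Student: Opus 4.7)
The plan is to follow the classical OFUL-style analysis \citep{abbasi2011improved} adapted to the projection-regularized Gram matrix $\B_k$ from \eqref{eq:Bdefinition}, with \Cref{lemma:confidence_set_bound} supplying the confidence radius $\gamma_k$ and \Cref{lemma:det_bound} controlling the log-determinant growth. On the high-probability event $\{\norm{\hat{\thetabf}_k - \thetabf^*}_{\B_k} \leq \gamma_k\}$, optimism of the UCB index \eqref{eq:proj_ucb} together with Cauchy--Schwarz in the $\B_k$-norm yields $\x_{a_k^*}^\top \thetabf^* - \x_{a_k}^\top \thetabf^* \leq 2\gamma_k \norm{\x_{a_k}}_{\B_k^{-1}}$ at each round $k$. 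Summing over $k=1,\dots,n$ and invoking Cauchy--Schwarz in $\Rset^n$ gives $\sum_{k} \norm{\x_{a_k}}_{\B_k^{-1}} \leq \sqrt{n\sum_{k} \norm{\x_{a_k}}_{\B_k^{-1}}^2}$, and a standard elliptical-potential argument built on the matrix-determinant identity $\det(\B_{k+1}) = \det(\B_k)(1+\norm{\x_{a_k}}_{\B_k^{-1}}^2)$ bounds the inner sum by a constant multiple of $\log(\det(\B_n)/\det(\B_0))$, which by \Cref{lemma:det_bound} is $O(S_n^{\lambda_1,\lambda_2})$. Combined with monotonicity of $k \mapsto \gamma_k$, this produces a pathwise regret bound of order $\gamma_n \sqrt{n\, S_n^{\lambda_1,\lambda_2}}$ on the good event; the complementary event contributes at most $2nV\delta$ in expectation, and setting $\delta = 1/n$ both absorbs this contribution into a constant and keeps $\log(1/\delta^2) = O(\log n)$ negligible inside $\gamma_n$.

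Next, the regularization parameters are tuned to match the announced rates. Taking $\lambda_2 = 1/V^2$ turns the $\sqrt{\lambda_2}V$ summand of the confidence radius into the constant $1$ and produces $p\log(1+nV^2/p)$ inside $S_n^{\lambda_1,\lambda_2}$. The remaining $(\lambda_1/\sqrt{\lambda_2})W$ term, with $W = \norm{\hat{\P}^\perp(\thetabf^* - \Bar{\thetabf})}$, is handled by passing to the outer expectation over $\thetabf^* \sim \rho$. In the first case of the theorem, $\hat{\P} = \P$ and $\Bar{\thetabf} = \mubf$, so \Cref{asm:main_assumption} and Jensen's inequality give $\mathbb{E}[W] \leq \sqrt{\mathrm{Var}_\rho}$; the choice $\lambda_1 = 1/\sqrt{\mathrm{Var}_\rho}$ then renders $\mathbb{E}[(\lambda_1/\sqrt{\lambda_2})W]$ a constant and yields the summand $q\log(1+n\sqrt{\mathrm{Var}_\rho}/q)$. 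In the second case I would substitute $\mathbb{E}[W] = O(\sqrt{Y})$ from \Cref{projection_error} and pick $\lambda_1 = 1/\sqrt{Y}$, replaying the argument with $\sqrt{Y}$ in place of $\sqrt{\mathrm{Var}_\rho}$. Combining the resulting $\mathbb{E}[\gamma_n] = O(\sqrt{S_n^{\lambda_1,\lambda_2}})$ with the pathwise $\sqrt{n\, S_n^{\lambda_1,\lambda_2}}$ factor delivers the claimed $O(\sqrt{n}\, S_n^{\lambda_1,\lambda_2})$ transfer-regret rate in each case.

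The main obstacle I anticipate is the interleaving of the outer expectation over $\thetabf^*$ with the inner high-probability confidence-set statement, since $W$ in $\gamma_k$ is itself a function of $\thetabf^*$ (and, in the second case, also of the randomness in the previously learned $\hat{\P}$ and $\Bar{\thetabf}$). Care is needed to apply Jensen's inequality before the square root so that $\mathbb{E}[W]$ rather than $\sqrt{\mathbb{E}[W^2]}$ enters into $\mathbb{E}[\gamma_n]$, to choose $\delta$ uniformly over the rounds of a single task so $\log(1/\delta^2)$ remains logarithmic, and to verify that the error bound from \Cref{projection_error} plugs into the $\lambda_1$-dependent term without leaking additional polynomial factors in $n$ or $t$ that would swamp the leading $\sqrt{n}\, S_n^{\lambda_1,\lambda_2}$ contribution.
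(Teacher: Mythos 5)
Your proposal matches the paper's proof in all essentials: the same OFUL decomposition into $2\gamma_k\norm{\x_{a_k}}_{\B_k^{-1}}$ per round via \Cref{lemma:confidence_set_bound}, the same Cauchy--Schwarz plus elliptical-potential step controlled by \Cref{lemma:det_bound}, the same tuning $\lambda_2=1/V^2$, $\lambda_1=1/\sqrt{Y}$ (resp.\ $1/\sqrt{\mathrm{Var}_\rho}$), $\delta=1/n$, and the same use of Jensen's inequality together with \Cref{projection_error} to convert the $W$-dependent term of the confidence radius into a constant under the outer expectation. The only (minor) difference is that you explicitly account for the complementary low-probability event, which the paper leaves implicit.
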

\begin{remark}
    The case $p=d,q=0$ yields the expected transfer regret $\mathcal{O}\left(\sqrt{n}d\log(1+\frac{nV^2}{d})\right)$, when no actual meta learning takes place and each task is learnt independently by the LinUCB algorithm.
\end{remark}
The results show that our approach is at most beneficial when $p$ is as low as possible such that \Cref{asm:main_assumption} still holds. In that case, increasing $\lambda_1$ in the algorithm reduces the overall regret bound, further supporting the argument that we made while discussing the confidence set bound in \Cref{lemma:confidence_set_bound}. By setting $\lambda_1=\frac{1}{\sqrt{Y}}$, the term $S^{\lambda_1,\lambda_2}_{n}$ defined in Lemma \ref{lemma:det_bound} changes such that only the $p$ dependent term becomes relevant as $Y$ significantly decreases and in turn $\log(1+\frac{n\sqrt{Y}}{q})$ as well,
essentially reducing the effective dimension of the problem to $p$ and indicating that less exploration is required within the $q$-dimensional subspace.
\section{Projection Meta Learning with Linear Thompson Sampling}
\label{sec:proj_TS}
%
\subsection{Basics of Linear Thompson Sampling}
LinUCB and linear Thompson sampling have the same requirements and assumptions concerning the linear relation between expected rewards and context vectors. Their difference lies in the decision-making process: In the former, the learner maximizes a UCB function by selecting the action at every round, whereas in the latter, it utilizes a Gaussian posterior calculated as $\mathcal{N}(\thetabf_k, v^2\A^{-1}_k)$, with $\thetabf_k$ estimated through solving the regularized least squares as done in LinUCB. From which, the learner then samples a parameter vector $\Tilde{\thetabf}_k$. It then selects the actions as
\begin{equation*}
    a = \argmax \x_a^\top \Tilde{\thetabf}_k.
\end{equation*}
The posterior is built from the prior of the previous instance given by $\mathcal{N}(\thetabf_{k-1}, v^2\A^{-1}_{k-1})$. This means that at $k=1$, during the initialization, we have $\A_0=\I$. 
The sampling process reflects the uncertainty of the current estimation $\thetabf_k$ and directly indicates the exploration behaviour of the learner. A low variance across a specified dimension indicates a high confidence of the current estimation and vice versa. Thus during initialization with $\A_0=\I$, there is equal exploration potential along any direction.
\subsection{Thompson Sampling with Linear Payoffs within an Affine Subspace}
Our second proposal is a variation of the linear Thompson sampling: We change the posterior from which $\Tilde{\thetabf}$ is sampled. The mean of the new distribution is the biased regularization solution $\hat{\thetabf}$ of \cref{eq:projecten_linucb_optimization}, and its covariance matrix is  $\B^{-1}$. Thus,
\begin{equation}
\label{eq:SecondModel}
    \Tilde{\thetabf}\sim\mathcal{N}(\hat{\thetabf}, v^2\B^{-1}).   
\end{equation}
In \cref{eq:SecondModel}, $v$ is a hyper-parameter that we determine in the analysis. During initialization, we have $\B_0=\lambda_1\hat{\P}^{\perp}+\lambda_2\hat{\P}$ and its inverse as covariance for the prior distribution. By choosing $\lambda_1\gg\lambda_2$ similar to the projected LinUCB setting, we embed our knowledge of the affine subspace into the prior. That way, the sampling process of $\Tilde{\thetabf}$ incorporates the low variance along the orthogonal subspace.
\subsection{Analysis}
The analysis in inspired by \cite{agrawal:linearTS}. First, we define the following two events:
\begin{definition}
    The event $E_{r}$ occurs if 
    \begin{equation*}
        \forall a \in \mathcal{A}_{k}: \left|\x_a^\top\left(\hat{\thetabf}_k-\thetabf^*\right)\right|\leq l_n\norm{\x_a}_{\B^{-1}_{k}},
    \end{equation*}
    with $l_n\coloneqq\sqrt{2\log(\frac{1}{\delta}) (d+2)\log(n)+2K^2}$ and\\
    $K\coloneqq\frac{\lambda_2}{\sqrt{\lambda_1}}W+\sqrt{\lambda_1}V$.\\
    The event $E_{\thetabf}$ occurs if 
    \begin{equation*}
        \forall a \in \mathcal{A}_k: \left|\x_a^\top\left(\hat{\thetabf}_k-\Tilde{\thetabf}_k\right)\right|\leq \sqrt{2d+6\log(n)}v\norm{\x_a}_{\B^{-1}_{k}},
    \end{equation*}
    with $v\coloneqq4\sqrt{\log(\frac{1}{\delta})\frac{d+2}{\alpha}}$ and $\alpha\in(0,1)$.
    \label{def:event_definitions}
\end{definition}
The event $E_r$ essentially reflects the confidence set bound discussed for the projected LinUCB case. It gives the probability that our current estimation $\hat{\thetabf}K$ lies within the bound. The event $E_{\thetabf}$ is directly linked to the sampling procedure of $\Tilde{\thetabf}_k$. It gives the probability that the reward estimation of the sampled $\Tilde{\thetabf}_k$ is within some limited range of the estimated reward of $\hat{\thetabf}_k$. 
 Below, we define a filtration, containing all necessary information for the algorithm.
\begin{definition}
    We define the filtration $\{\mathcal{F}_{k}\}_{k\in\{1,...,n\}}$
    with sub-$\sigma$-algebras $\mathcal{F}_{k-1}$ at round $k$ generated by the current action set and the history up to round $k-1$: $\mathcal{F}_{k-1}=\{\mathcal{A}_k,\mathcal{H}_{k-1}\}$, with the history being recursively defined as: 
    \begin{equation*}
        \mathcal{H}_{k}=\{\mathcal{A}_{k},\hat{\thetabf}_{k},\B_{k},\norm{\x_{a}}_{\B^{-1}_{k}},\mathcal{N}(\hat{\thetabf}_{k}, \B^{-1}_{k})\} \cup\mathcal{H}_{k-1}.
    \end{equation*}
\end{definition}
The next lemma states the probability of the events $E_{r}$ and $E_{\thetabf}$.
\begin{lemma}
    For all $\delta \in (0, 1)$, the probability of event $E_r$ is bounded from below as follows: $\mathrm{Pr}(E_r) \geq 1-\frac{\delta}{n^2}$. Moreover, for all possible filtrations $\mathcal{F}_{k-1}$, the probability of event $E_{\thetabf}$ is bounded from below as follows: $\mathrm{Pr}(E_{\thetabf}|\mathcal{F}_{k-1})\geq 1-\frac{1}{n^2}$.
    \label{lemma:Event_bound}
\end{lemma}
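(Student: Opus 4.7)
The plan is to prove the two claims separately, reusing the confidence-set machinery already developed for the projected LinUCB variant for $E_r$, and exploiting the conditional Gaussianity of $\tilde{\thetabf}_k$ for $E_\theta$.

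For $E_r$, I would instantiate \Cref{lemma:confidence_set_bound} at confidence level $1-\delta/n^2$ (i.e.\ replace $\delta$ by $\delta/n^2$), yielding
\[
\norm{\hat{\thetabf}_k-\thetabf^*}_{\B_k} \;\leq\; \sqrt{S_k^{\lambda_1,\lambda_2}+\log(n^4/\delta^2)}+\sqrt{\lambda_2}\,V+\tfrac{\lambda_1}{\sqrt{\lambda_2}}W
\]
with probability at least $1-\delta/n^2$. I would then apply \Cref{lemma:det_bound} to bound $S_k^{\lambda_1,\lambda_2}\leq S_n^{\lambda_1,\lambda_2}=\mathcal{O}(d\log n)$, and use $(a+b)^2\leq 2a^2+2b^2$ to lump the additive regularization terms into $2K^2$, matching the form of $l_n^2$ in \Cref{def:event_definitions}. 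Uniformity in $a\in\mathcal{A}_k$ is then obtained by a single Cauchy-Schwarz step $|\x_a^\top(\hat{\thetabf}_k-\thetabf^*)|\leq \norm{\hat{\thetabf}_k-\thetabf^*}_{\B_k}\,\norm{\x_a}_{\B_k^{-1}}$, which holds on the same event for every arm simultaneously and crucially avoids any union bound over $|\mathcal{A}_k|$ (whose cardinality is unspecified).

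For $E_\theta$, the key observation is that, conditional on $\mathcal{F}_{k-1}$, both $\hat{\thetabf}_k$ and $\B_k$ are deterministic, and by \cref{eq:SecondModel} we have $\tilde{\thetabf}_k-\hat{\thetabf}_k\mid\mathcal{F}_{k-1}\sim \mathcal{N}(\nul,v^2\B_k^{-1})$. Setting $\y\coloneqq v^{-1}\B_k^{1/2}(\tilde{\thetabf}_k-\hat{\thetabf}_k)$ gives $\y\sim\mathcal{N}(\nul,\I_d)$, so $\norm{\tilde{\thetabf}_k-\hat{\thetabf}_k}_{\B_k}^2=v^2\norm{\y}^2\sim v^2\chi_d^2$. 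I would then invoke the Laurent-Massart tail bound $\Pr(\norm{\y}^2\geq d+2\sqrt{dt}+2t)\leq e^{-t}$ with $t=2\log n$ to obtain probability at least $1-1/n^2$, and absorb the cross term by AM-GM: $2\sqrt{2d\log n}\leq d+2\log n$, so $\norm{\tilde{\thetabf}_k-\hat{\thetabf}_k}_{\B_k}^2\leq v^2(2d+6\log n)$. One more Cauchy-Schwarz step lifts this to the required uniform bound over $a\in\mathcal{A}_k$.

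The step I expect to be the main obstacle is the bookkeeping around constants: reconciling the regularization remainder $\sqrt{\lambda_2}V+\lambda_1 W/\sqrt{\lambda_2}$ coming out of \Cref{lemma:confidence_set_bound} with the form $K=\lambda_2 W/\sqrt{\lambda_1}+\sqrt{\lambda_1}\,V$ used in \Cref{def:event_definitions}, and organizing $S_n^{\lambda_1,\lambda_2}+\log(n^4/\delta^2)$ into the compact expression $2\log(1/\delta)(d+2)\log n$ appearing in $l_n$. This likely requires symmetrizing the roles of $\lambda_1$ and $\lambda_2$ (or absorbing a constant factor that rescales them) for the Thompson-sampling setting. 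Once this alignment is carried out, both parts reduce to a single concentration inequality followed by Cauchy-Schwarz and carry no additional probabilistic complication.
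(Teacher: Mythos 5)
Your proposal follows essentially the same route as the paper's proof: for $E_r$, instantiate \Cref{lemma:confidence_set_bound} at level $\delta/n^2$, absorb $S_k^{\lambda_1,\lambda_2}+\log(n^2/\delta^2)$ into the $(d+2)\log n$ term and the regularization remainder into $2K^2$, then Cauchy--Schwarz; for $E_\thetabf$, whiten $v^{-1}\B_k^{1/2}(\tilde{\thetabf}_k-\hat{\thetabf}_k)\sim\mathcal{N}(\nul,\I_d)$ and apply the $\chi_d^2$ tail with $t=2\log n$ (the paper invokes Hanson--Wright with $\C=\I$, which is exactly your Laurent--Massart bound) followed by the same AM--GM step to reach $2d+6\log n$. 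The $\lambda_1\leftrightarrow\lambda_2$ mismatch between \Cref{lemma:confidence_set_bound} and the definition of $K$ that you flag as the main obstacle is real but is silently glossed over in the paper's own proof, so you have correctly identified the only genuinely loose joint in the argument.
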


In the following theorem, we establish an upper-bound for the transfer regret of the projected Thompson sampling algorithm%
\begin{theorem}
    The expected transfer regret of the projected Thompson sampling algorithm verifies
    \begin{equation*}
        \mathcal{R}(n)= \mathcal{O}\biggl(\left(d^{\frac{3}{2}}\log(n)+\sqrt{d}\log(n)^2\right)\sqrt{n S_{n}^{\frac{1}{\sqrt{Y}},\frac{1}{V^2}}}\biggr).
    \end{equation*}
    \label{theorem:proj_ts_regret}
\end{theorem}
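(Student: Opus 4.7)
The plan is to follow the standard Thompson sampling regret analysis of \citet{agrawal:linearTS} but with the modified confidence geometry induced by $\B_k$ instead of $\A_k$, so that the elliptical potential lemma eventually produces the quantity $S_n^{\lambda_1,\lambda_2}$ from \Cref{lemma:det_bound}. First I would decompose the instantaneous regret $\Delta_k=\x_{a_k^*}^\top\thetabf^*-\x_{a_k}^\top\thetabf^*$ on the joint event $E_r\cap E_{\thetabf}$, using \Cref{def:event_definitions} to write
\begin{equation*}
\x_{a}^\top\thetabf^*-\x_{a}^\top\tilde{\thetabf}_k\;\leq\;(l_n+\sqrt{2d+6\log n}\,v)\,\norm{\x_a}_{\B_k^{-1}},
\end{equation*}
for every arm $a$, so that the optimism / sampling gap is controlled by $\norm{\cdot}_{\B_k^{-1}}$. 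By \Cref{lemma:Event_bound}, the complement $E_r^c\cup E_{\thetabf}^c$ contributes at most $\mathcal{O}(1)$ to the total regret after a union bound.

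Next I would import the saturated/unsaturated machinery. Call an arm $a$ saturated at round $k$ if $(\x_{a_k^*}-\x_a)^\top\thetabf^*>(l_n+\sqrt{2d+6\log n}\,v)\norm{\x_a}_{\B_k^{-1}}$ and unsaturated otherwise. The core probabilistic fact to establish is that, conditional on $\mathcal{F}_{k-1}$ and on $E_r$, the Thompson sample $\tilde{\thetabf}_k$ assigns the optimistic action an unsaturated arm with at least constant probability $p\in(0,1)$. This is the exact analogue of Lemma~3 in \citet{agrawal:linearTS}; it relies on Gaussian anti-concentration applied to the posterior $\mathcal{N}(\hat{\thetabf}_k,v^2\B_k^{-1})$ evaluated at $\x_{a_k^*}^\top\tilde{\thetabf}_k$, and then uses the event $E_r$ together with the definition of $v$ to force the probability above the required threshold. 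Once this is in hand, a standard argument yields
\begin{equation*}
\mathbb{E}\!\left[\Delta_k\mid\mathcal{F}_{k-1}\right]\;\leq\;\frac{C}{p}\,(l_n+\sqrt{2d+6\log n}\,v)\,\mathbb{E}\!\left[\norm{\x_{a_k}}_{\B_k^{-1}}\mid\mathcal{F}_{k-1}\right]+o(1),
\end{equation*}
on $E_r$, after invoking the unsaturated-arm comparison that controls $\Delta_k$ by the Thompson-sample gap on any unsaturated arm.

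Then I would set up a super-martingale $X_k=\sum_{i\le k}\Delta_i-\text{(predictable upper bound)}$ and apply Azuma--Hoeffding to convert the per-step conditional bound into a high-probability cumulative bound, as in the original proof; the bounded-difference hypothesis follows from $\norm{\x_a}\le 1$ and the bounded noise subgaussianity. Summing the resulting $\norm{\x_{a_k}}_{\B_k^{-1}}$ terms via Cauchy--Schwarz gives
\begin{equation*}
\sum_{k=1}^n\norm{\x_{a_k}}_{\B_k^{-1}}\;\leq\;\sqrt{n\sum_{k=1}^n\norm{\x_{a_k}}_{\B_k^{-1}}^2},
\end{equation*}
and the elliptical potential lemma, applied to the matrix recursion $\B_{k+1}=\B_k+\x_{a_k}\x_{a_k}^\top$ initialized at $\B_0=\lambda_1\hat{\P}^\perp+\lambda_2\hat{\P}$, bounds the squared sum by $2\log(\det\B_n/\det\B_0)$, which by \Cref{lemma:det_bound} equals $2S_n^{\lambda_1,\lambda_2}$. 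Finally, plugging in $l_n=\mathcal{O}(\sqrt{d\log n})$ and $v=\mathcal{O}(\sqrt{d\log n})$ from \Cref{def:event_definitions} produces the product $(d^{3/2}\log n+\sqrt d(\log n)^2)\sqrt{n S_n^{\lambda_1,\lambda_2}}$, and the choice $\lambda_1=1/\sqrt{Y}$, $\lambda_2=1/V^2$, motivated by the analysis after \Cref{theorem:proj_ucb_bound}, substitutes the constants $K$ and $l_n$ into the form appearing in the theorem; combining with the expectation over $\thetabf^*\sim\rho$ (which only affects $W$ inside $K$, controlled by \Cref{projection_error}) gives the announced bound.

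The main obstacle I anticipate is the anti-concentration step on $E_r$: the projection-biased prior $\B_0=\lambda_1\hat{\P}^\perp+\lambda_2\hat{\P}$ with $\lambda_1\gg\lambda_2$ makes the posterior highly anisotropic, so care is needed to verify that the Gaussian lower-tail inequality used to show $\mathrm{Pr}(\x_{a_k^*}^\top\tilde{\thetabf}_k>\x_{a_k^*}^\top\thetabf^*\mid\mathcal{F}_{k-1})\ge c$ still holds with the same constant, independent of the subspace dimension $p$ and of the ratio $\lambda_1/\lambda_2$. This is precisely where the explicit value of $v$ in \Cref{def:event_definitions} has been tuned, and I would verify that the $(d+2)/\alpha$ factor is indeed sufficient once the norm is measured by $\B_k^{-1}$ rather than $\A_k^{-1}$; the rest of the analysis is then a careful but mostly mechanical transcription of the LinTS proof with $\A_k\mapsto\B_k$ and $\lambda\I\mapsto\lambda_1\hat{\P}^\perp+\lambda_2\hat{\P}$.
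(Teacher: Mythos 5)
Your proposal follows essentially the same route as the paper's proof: the events $E_r$ and $E_{\thetabf}$, the saturated/unsaturated arm decomposition, Gaussian anti-concentration for the optimal arm, a super-martingale with Azuma--Hoeffding, and the elliptical potential lemma applied to $\B_k$ to produce $S_n^{\lambda_1,\lambda_2}$, followed by the substitution $\lambda_1=1/\sqrt{Y}$, $\lambda_2=1/V^2$. The only cosmetic difference is that the paper's anti-concentration constant is $c_n=\frac{1}{4e\sqrt{\pi n^{\alpha}}}$ rather than an absolute constant, which becomes constant only after the choice $\alpha=1/\log(n)$ — exactly the tuning you flag as the delicate step.
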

\begin{remark}
    With $p=d,q=0$, the meta learning does not take place, i.e., the agent learns each task independently by the linear TS algorithm. As such, the expected transfer regret yields $\mathcal{O}\left(\left(d^{2}\log(n)+d\log(n)^2\right)\sqrt{n\log(1+\frac{nV^2}{d})}\right)$. 
\end{remark}
The results shows a the dependency on the dimensions $p$ and $q$, and the variance related term $Y$. For a sufficiently small $Y$, the terms scaling with $p$ would dominate the regret, so we expect greater improvements with decreasing $p$. The term scaling with $q$ would benefit from the low variance within the respective subspace. As suggested in \cite{agrawal:linearTS}, we chose $\alpha=\frac{1}{\log(n)}$ in the proofs.
\section{Algorithms}
The projected LinUCB and projected TS algorithms share many steps. Thus, we unify them and use sub-procedures.
\begin{algorithm}[ht]
\caption{Projected LinUCB/Thompson Sampling}
Initialize: $v > 0$, $\delta\in(0,1)$, $\lambda_1>\lambda_2>0$, $\lambda>0$, $\delta\in (0,1)$;\\
\For{$t\in\{1,...,T\}$}{
Initialize: $\A_0=\lambda\I$, $\b'_0=\nul$;\\
    Sample new task $\thetabf^*\sim\rho$;\\
     \If{$t<d$}{$\hat{\P}=\I$, $\hat{\P}^{\perp}=\nul$, $\w=\nul$;\\}
    \Else{Determine principal components and calculate $\hat{\P}$ and $\hat{\P}^{\perp}$ with $[\thetabf(i)]_{i\in\{1,...,t\}}$ according to \cref{eq:principal_components,eq:construct_proj} and $\w=\frac{1}{t-1}\hat{\P}^{\perp}\sum_{i=1}^{t-1}\thetabf(i)$;}
Initialize $\B_0=\lambda_1\hat{\P}^{\perp}+\lambda_2\hat{\P}$,  $\b_0=\lambda_1\hat{\P}^{\perp}\w$, $\hat{\thetabf}_0=\B_0^{-1}\b_0$;\\    
\For{$k\in\{0,...,n-1\}$}{
    Select arm $a_k$ according to respective arm selection strategy (\Cref{alg:lin_ucb_select} or \ref{alg:TS_select});\\
    Collect immediate reward $r_k$;\\
    $\B_{k+1}=\B_k+\x_{a_k}\x_{a_k}^\top$;\\
    $\A_{k+1}=\A_k+\x_{a_k}\x_{a_k}^\top$;\\
    $\b_{k+1}=\b_k+r_k\x_{a_k}$;\\
    $\b'_{k+1}=\b'_k+r_k\x_{a_k}$;\\
    $\hat{\thetabf}_{{k+1}}={\B}^{-1}_{k+1}\b_{k+1}$;}
    $\thetabf(t)=\A^{-1}_{n}\b'_{n}$;}
\end{algorithm}
\begin{algorithm}[t]
    \caption{Projected LinUCB Arm Selection Routine}
    \label{alg:lin_ucb_select}
    Input: $\hat{\thetabf}_k$, $\B_k$;\\
    $\gamma_k=\log(\frac{\det(\B_k)}{\delta^2\lambda_1^q\lambda_2^p})+\sqrt{\lambda_2}V+\frac{\lambda_1}{\sqrt{\lambda_2}}W$;\\
     Select arm $a_k=\argmax_a\mathrm{UCB}(a)$ from \eqref{eq:proj_ucb};
\end{algorithm}
\begin{algorithm}[ht!]
    \caption{Projected TS arm selection routine}
    \label{alg:TS_select}
    Input: $\hat{\thetabf}_k$, $\B_k$, $\alpha\in (0,1)$;\\
    $v=4\sqrt{\log(\frac{1}{\delta})\frac{d+2}{\alpha}}$;\\
    sample $\Tilde{\thetabf}_k\sim \mathcal{N}(\hat{\thetabf}_k, v^2\B_k^{-1})$;\\
    Select arm $a_k=\argmax_a\x_a^{\top}\Tilde{\thetabf}_k$;
\end{algorithm}
We introduce an initialization phase for learning the subspace, as it may only be well-defined after including sufficient task parameters. Enforcing the subspace learning already from the first task might lead to zero-dimensional subspace with $\hat{\P}=\nul$ that would degrade the overall performance.
In the projected LinUCB algorithm, we require the estimation of $\gamma_k$ taken from \Cref{lemma:det_bound}, which in turn requires the value of $W=\norm{\hat{\P}^{\perp}\left(\thetabf^*-\Bar{\thetabf}\right)}$, which is intractable but we work around this issue by using the simple bound $W \leq 2V$. Since $\gamma_k$ acts more as exploration scaling factor, we do not lose any benefit from the meta learning as the actual knowledge transfer becomes relevant in the calculations of $\B_k$ and $\hat{\thetabf}_k$ 
\section{Numerical Experiments}
\label{sec:experiments}
We test our algorithms experimentally on synthetic data and on real world data taken from the MovieLens data set.
\subsection{Synthetic Data Experiments}
We sampled the context vectors from a zero mean normal distribution with a diagonal covariance matrix whose elements followed a uniform distribution. Following \cite{mezzadri2006generate}, we used a randomly generated orthogonal matrix to define a subspace. We project the randomly generated task parameters onto the subspace and add a multivariate Gaussian noise term in the orthogonal direction to the given subspace to simulate the variance of the task distribution. One drawback of this approach is that it misses the benefits of subspace learning during the first tasks. That is because a subspace with dimension $p$ that ought to be learned requires at least $q=d-p$ data points or task parameters to use the PCA algorithms successfully. Thus, we also implement an initialization phase to prevent subspace learning until learning at least $d$ task parameters. Note that we require at least $d$ tasks as we do not use our knowledge of $p$. We consider a task as finished after at least $n=250$ rounds. \\
\Cref{fig:synthetic_expected_plot} shows the expected transfer regret for $d=30$ and $p=15$, with the oracle and the algorithms of \cite{cella2020meta} (B-OFUL), \cite{peleg2022Metalearning} (M-TS) as benchmarks. The projected Thompson sampling (P-TS) approach performs as well as projected LinUCB (P-LinUCB), while the oracle using the true projection and mean is the most efficient one. Our algorithms are significant improvements the other baselines. The superiority of our approach mainly stems from its generality compared to \citet{cella2020meta}. The algorithm provided by \citet{peleg2022Metalearning} has the worst performance mainly due to the regret contributed by the required forced exploration within a task. Additionally, \Cref{fig:synthetic_meta_plot} shows the total cumulative regret over tasks, which does not suffer from overlapping error bars as they become negligible.
\begin{figure}[htb]
    \vspace{.3in}
    \centering
    \begin{subfigure}{0.23\textwidth}
    \centering
    \includegraphics[width=\textwidth]{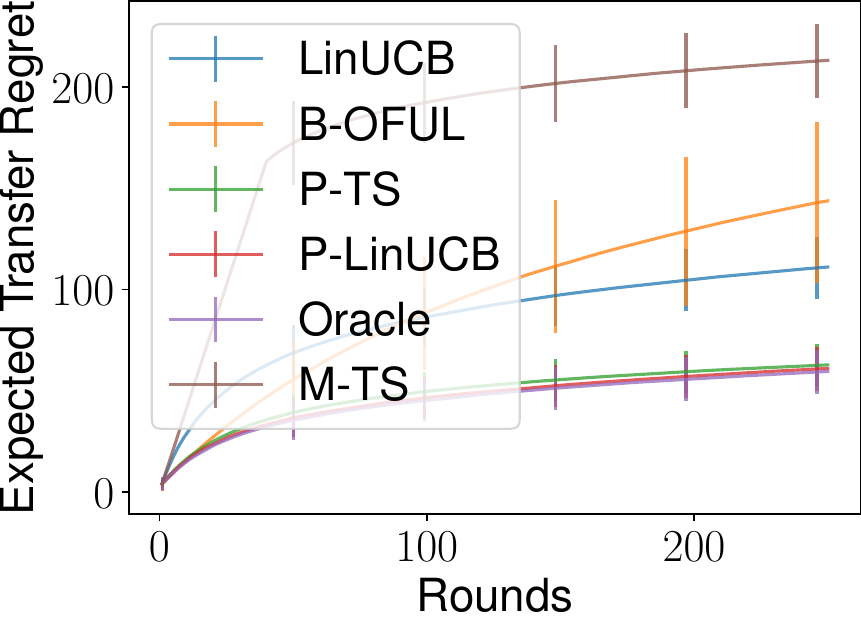}
    \caption{Synthetic data plots of the expected transfer regret as a function of number of rounds.}
    \label{fig:synthetic_expected_plot}
    \end{subfigure}
    \hfill
    \begin{subfigure}{0.23\textwidth}
    \centering
    \includegraphics[width=\textwidth]{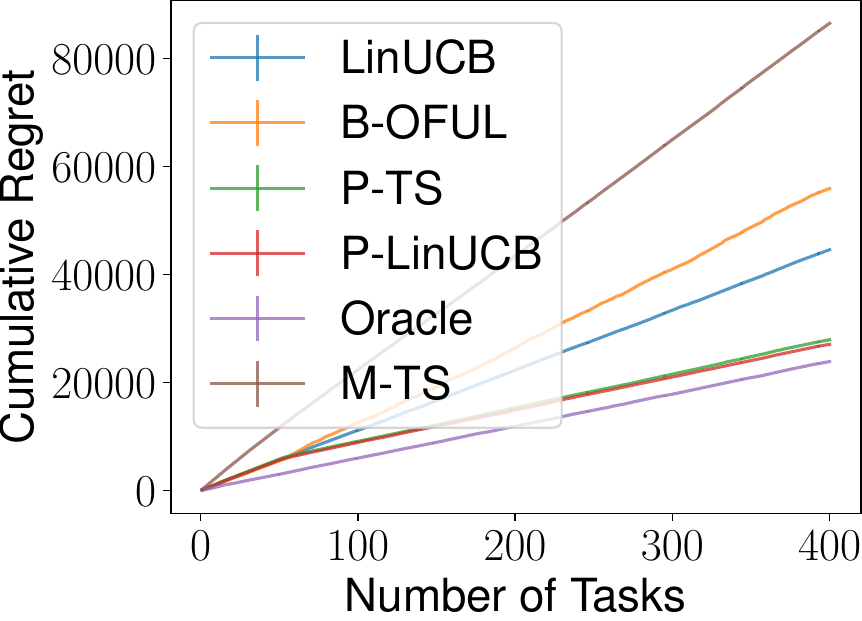}
    \caption{Synthetic data plots of the cumulative regret over the number of tasks.}
    \label{fig:synthetic_meta_plot}
    \end{subfigure}
    \caption{Expected transfer and total cumulative regret plots of the LinUCB and Thompson sampling methods compared to their projection counterparts and additional baselines.}
\end{figure}
To further emphasize the benefit of exploiting the knowledge on any dimensional low variance, \Cref{fig:dimensional_regret} shows the total accumulated regret of the projected LinUCB algorithm after $T$ tasks with $n$ rounds of learning each as a function of $q=\rank(\hat{\P}^{\perp})$. Note that at $q=0$, the plot shows the total regret using classic LinUCB. As expected, the regret reaches its minimum when $q$ is equal to the rank of the true projection, which is $q=15$ in this case. Nevertheless even for different values of $q$, there is a clear benefit over the classic approach. In \Cref{fig:W_plot} we plot $\left|\mathbb{E}_{\thetabf^*\sim\rho}[\mathbb{E}[W]]^2/\mathrm{Var}_{\rho}-1\right|$ as a function of number of tasks. As expected the curves for P-LinUCB and P-TS imply that $\mathbb{E}_{\thetabf^*\sim\rho}[\mathbb{E}[W]]^2$ is close to $\mathrm{Var}_{\rho}$. The curve for the B-OFUL algorithm assumes $\P^{\perp}=\I$, disregarding the covariance and thus resulting into higher values. Note that lower values imply greater transfer in between sequential tasks as the projection matrix would be well estimated.
\begin{figure}
     \vspace{.3in}
    \centering
    \begin{subfigure}{0.23\textwidth}
    \centering
    \includegraphics[width=\textwidth]{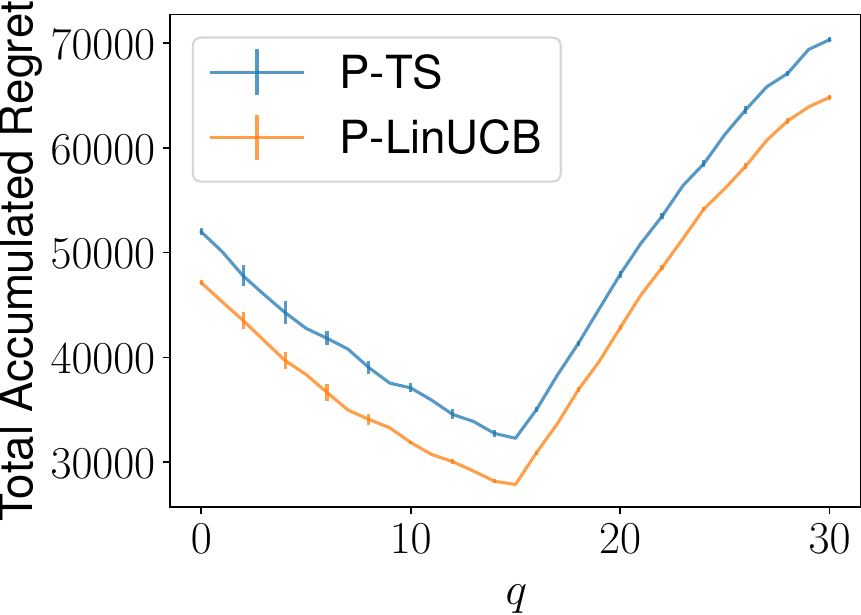}
    \caption{Total accumulated regret after 400 tasks as a function of $\rank(\hat{\P}^{\perp})$.\\}
    \label{fig:dimensional_regret}
    \end{subfigure}
    \hfill
    \begin{subfigure}{0.23\textwidth}
    \centering
    \includegraphics[width=\textwidth]{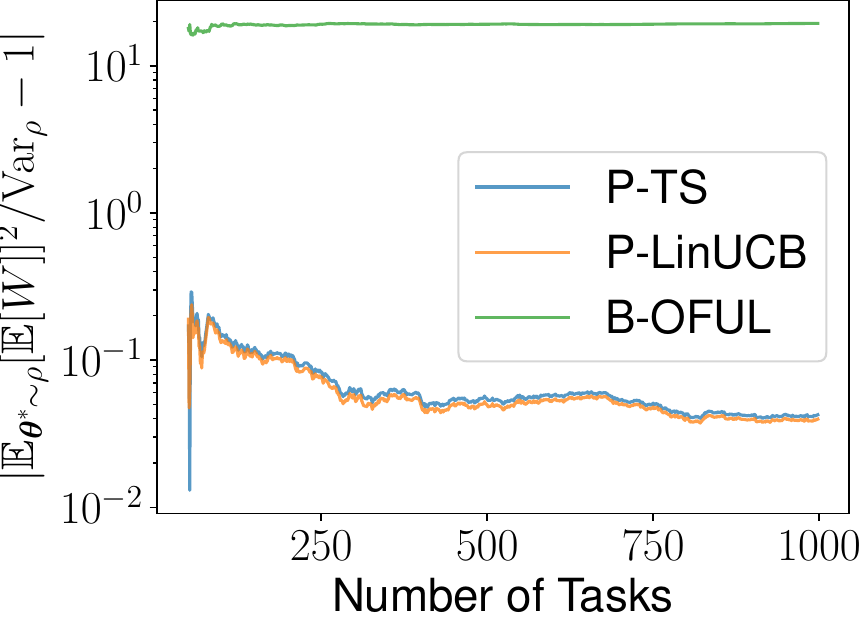}
    \caption{Relative error of $\mathbb{E}_{\thetabf^*\sim\rho}[\mathbb{E}[W]]^2$ as a function of number of tasks in a logarithmic scale.}
    \label{fig:W_plot}
    \end{subfigure}
    \caption{}
\end{figure}
\subsection{Real Data Experiments}
We use MovieLens data to test our algorithms in a real-world environment. MovieLens data contains information about over 6000 users that represent the tasks in our setting. Besides, it includes over 3000 movies, which are the arms with their corresponding context vectors. The context vectors are 18-dimensional, each denoting a possible genre. If a movie has a label for a specific genre, the corresponding entry for that genre in the context vector is 1. With at most six different genres assigned to a single movie, we normalize the context vectors such that we have $\norm{\x_a}\leq 1$. Each movie has some available ratings between 1 and 5, given by a user who has watched that movie. Each rating represents a reward for our algorithm. We normalize all such ratings so that $r\in[0,1]$. We further process the data by grouping the users by their profession or gender and run the algorithm within that set of users. That method stems from the assumption that groups of similar users might share an affine subspace. For every user (task), we run the algorithms for at least $n=250$ rounds. 
\begin{figure}
    \vspace{.3in}
    \centering
    \begin{subfigure}{0.23\textwidth}
    \centering
    \includegraphics[width=\textwidth]{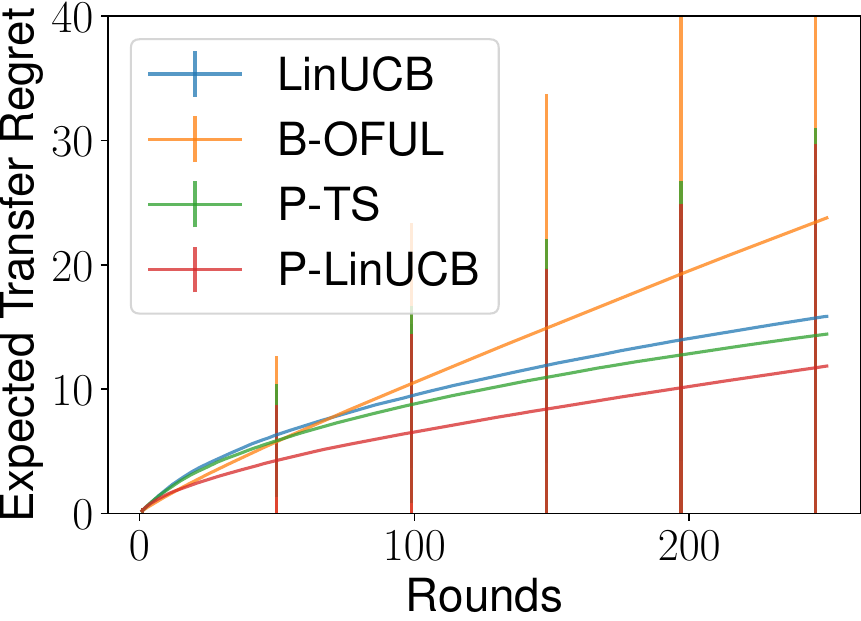}
    \caption{Real data plots of the expected transfer regret as a function of number of rounds.}
    \label{fig:real_expected_plot}
    \end{subfigure}
    \hfill
    \begin{subfigure}{0.23\textwidth}
    \centering
    \includegraphics[width=\textwidth]{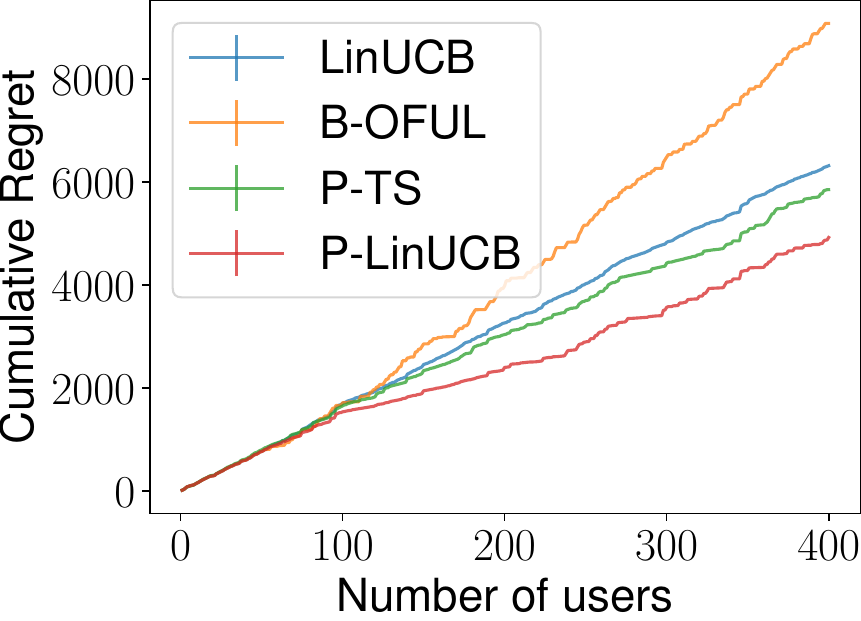}
    \caption{Real data plots of the cumulative regret over tasks/users.\\}
    \label{fig:real_meta_plot}
    \end{subfigure}
    \caption{Expected transfer regret and total regret plots of our algorithms and baselines applied to the MovieLens data set. We have included 400 users in the simulations.}
\end{figure}
We do not include the algorithm developed in \citet{peleg2022Metalearning} as baseline for experimentation using the real data set as it requires contexts from a distribution with an invertible covariance. The reason is that the authors do not use a regularizer on the minimum least squares solution for $\thetabf$, and thus find the inverse of $\D^\top\D$ after every finished task. However, in the MovieLens data set, that condition does not hold for many users, making the estimation of $\thetabf$ ill-posed.
In the real data experiments, we observe significant improvements of our models over the baselines in \Cref{fig:real_expected_plot} showcasing the expected transfer regret, and in \Cref{fig:real_meta_plot}, showcasing the total cumulative regret over users, which does not suffer from overlapping error bars. A significant point is that we did not perform data preprocessing besides normalizing the rewards and dividing the users into male and female. That would also explain the performance gap to the algorithm of \citet{cella2020meta}, as our assumption is more general and widely applicable.
\section{Discussion and Outlook}
Our work shows that obtaining knowledge about the underlying subspace structure in a meta-learning setting improves sequential task learning. More precisely, assuming a low variance along certain dimensions in the task distribution, we proposed two decision-making policies that exploit the knowledge of the subspace structure for sequential arm selection and significantly improve the performance of widely used algorithms, namely LinUCB and linear Thompson sampling. We provided an improved regret bound that manifests the dependency on the lower dimension, the low variance term, and the eigengap at the considered low dimension. We evaluated our methods numerically through experimentations on synthetic and real-world datasets, confirming their better performance than traditional benchmarks. The results are significant in the real data environments as the rewards do not necessarily follow a linear relation.\\
Possible extensions of this work include further generalization of our model by learning the variance of the task distribution along all dimensions. Another direction is to generalize our methods to non-linear settings, \emph{i.e}., when tasks concentrate around a low dimensional manifold.
\newpage
\subsubsection*{Acknowledgements}
This work was supported by Grant 01IS20051 from the German Federal Ministry of Education and Research (BMBF).
The authors thank the International Max Planck
Research School for Intelligent Systems (IMPRS-IS) for supporting Steven Bilaj.

\bibliography{aistats.bib}

\begin{thebibliography}{}

\bibitem[Abbasi-Yadkori et~al., 2011]{abbasi2011improved}
Abbasi-Yadkori, Y., P\'{a}l, D., and Szepesv\'{a}ri, C. (2011).
\newblock Improved algorithms for linear stochastic bandits.
\newblock In {\em Advances in Neural Information Processing Systems}.

\bibitem[Agrawal and Goyal, 2013]{agrawal:linearTS}
Agrawal, S. and Goyal, N. (2013).
\newblock Thompson sampling for contextual bandits with linear payoffs.
\newblock In {\em Proceedings of the 30th International Conference on Machine Learning}.

\bibitem[Aiolli, 2012]{aioli2012transferkernellearning}
Aiolli, F. (2012).
\newblock Transfer learning by kernel meta-learning.
\newblock In {\em Proceedings of ICML Workshop on Unsupervised and Transfer Learning}.

\bibitem[Amit and Meir, 2018]{amit2018meta}
Amit, R. and Meir, R. (2018).
\newblock Meta-learning by adjusting priors based on extended {PAC}-{B}ayes theory.
\newblock In {\em Proceedings of the 35th International Conference on Machine Learning}.

\bibitem[Atan et~al., 2023]{Atan23:DDO}
Atan, O., Ghoorchian, S., Maghsudi, S., and van~der Schaar, M. (2023).
\newblock Data-driven online recommender systems with costly information acquisition.
\newblock {\em {IEEE} Trans. Serv. Comput.}

\bibitem[Azar et~al., 2013]{azar2013sequential}
Azar, M., Lazaric, A., and Brunskill, E. (2013).
\newblock Sequential transfer in multi-armed bandit with finite set of models.
\newblock {\em Advances in Neural Information Processing Systems}.

\bibitem[Azizi et~al., 2022]{azizi2022non}
Azizi, M., Duong, T., Abbasi-Yadkori, Y., Gy{\"o}rgy, A., Vernade, C., and Ghavamzadeh, M. (2022).
\newblock Non-stationary bandits and meta-learning with a small set of optimal arms.
\newblock {\em arXiv preprint arXiv:2202.13001}.

\bibitem[Balcan et~al., 2022]{balcan2022meta}
Balcan, M.-F., Harris, K., Khodak, M., and Wu, Z.~S. (2022).
\newblock Meta-learning adversarial bandits.
\newblock {\em arXiv preprint arXiv:2205.14128}.

\bibitem[Bastani et~al., 2019]{bastani2019Meta}
Bastani, H., Simchi-Levi, D., and Zhu, R. (2019).
\newblock Meta {{Dynamic Pricing}}: {{Transfer Learning Across Experiments}}.

\bibitem[Basu et~al., 2021]{basu2021No}
Basu, S., Kveton, B., Zaheer, M., and Szepesvari, C. (2021).
\newblock No {{Regrets}} for {{Learning}} the {{Prior}} in {{Bandits}}.
\newblock In {\em Advances in {{Neural Information Processing Systems}}}.

\bibitem[Baxter, 2000]{baxter2000model}
Baxter, J. (2000).
\newblock A model of inductive bias learning.
\newblock {\em Journal of artificial intelligence research}.

\bibitem[Bilaj et~al., 2023]{Bilaj23:HTB}
Bilaj, S., Dhouib, S., and Maghsudi, S. (2023).
\newblock Hypothesis transfer in bandits by weighted models.
\newblock In {\em Machine Learning and Knowledge Discovery in Databases}.

\bibitem[Bouneffouf et~al., 2020]{bouneffouf2020survey}
Bouneffouf, D., Rish, I., and Aggarwal, C. (2020).
\newblock Survey on applications of multi-armed and contextual bandits.
\newblock In {\em 2020 IEEE Congress on Evolutionary Computation (CEC)}.

\bibitem[Boutilier et~al., 2020]{boutilier2020differentiable}
Boutilier, C., Hsu, C.-W., Kveton, B., Mladenov, M., Szepesv{\'a}ri, C., and Zaheer, M. (2020).
\newblock Differentiable meta-learning of bandit policies.
\newblock {\em Advances in Neural Information Processing Systems}.

\bibitem[Cardot and Degras, 2015]{online_pca}
Cardot, H. and Degras, D. (2015).
\newblock Online principal component analysis in high dimension: Which algorithm to choose?

\bibitem[Cella et~al., 2020]{cella2020meta}
Cella, L., Lazaric, A., and Pontil, M. (2020).
\newblock Meta-learning with stochastic linear bandits.
\newblock In {\em International Conference on Machine Learning}. PMLR.

\bibitem[Cella et~al., 2022a]{cella2022meta}
Cella, L., Lounici, K., and Pontil, M. (2022a).
\newblock Meta representation learning with contextual linear bandits.
\newblock {\em arXiv preprint arXiv:2205.15100}.

\bibitem[Cella et~al., 2022b]{cella_multitask_2022}
Cella, L., Lounici, K., and Pontil, M. (2022b).
\newblock Multi-task representation learning with stochastic linear bandits.

\bibitem[Cella and Pontil, 2021]{cella2021multi}
Cella, L. and Pontil, M. (2021).
\newblock Multi-task and meta-learning with sparse linear bandits.
\newblock In {\em Uncertainty in Artificial Intelligence}. PMLR.

\bibitem[Chu et~al., 2011]{chu2011contextual}
Chu, W., Li, L., Reyzin, L., and Schapire, R. (2011).
\newblock Contextual bandits with linear payoff functions.
\newblock In {\em AISTATS}.

\bibitem[Denevi et~al., 2018]{denevi2018learning}
Denevi, G., Ciliberto, C., Stamos, D., and Pontil, M. (2018).
\newblock Learning to learn around a common mean.
\newblock {\em Advances in Neural Information Processing Systems}.

\bibitem[Finn et~al., 2019]{finn2019online}
Finn, C., Rajeswaran, A., Kakade, S., and Levine, S. (2019).
\newblock Online meta-learning.
\newblock In {\em International Conference on Machine Learning}. PMLR.

\bibitem[Glowacka et~al., 2019]{glowacka2019bandit}
Glowacka, D. et~al. (2019).
\newblock Bandit algorithms in information retrieval.
\newblock {\em Foundations and Trends{\textregistered} in Information Retrieval}.

\bibitem[Hanson and Wright, 1971]{hanso_wright_1971}
Hanson, D.~L. and Wright, F.~T. (1971).
\newblock A bound on tail probabilities for quadratic forms in independent random variables.
\newblock {\em The Annals of Mathematical Statistics}.

\bibitem[Hsu et~al., 2012]{hanson_wright_by_hsu}
Hsu, D., Kakade, S., and Zhang, T. (2012).
\newblock {A tail inequality for quadratic forms of subgaussian random vectors}.
\newblock {\em Electronic Communications in Probability}.

\bibitem[Hutter et~al., 2019]{hutter2019automated}
Hutter, F., Kotthoff, L., and Vanschoren, J. (2019).
\newblock {\em Automated machine learning: methods, systems, challenges}.
\newblock Springer Nature.

\bibitem[Jiang et~al., 2022]{jiang2022subspace}
Jiang, W., Kwok, J., and Zhang, Y. (2022).
\newblock Subspace learning for effective meta-learning.
\newblock In {\em International Conference on Machine Learning}. PMLR.

\bibitem[Kassraie et~al., 2022]{kassraie2022MetaLearning}
Kassraie, P., Rothfuss, J., and Krause, A. (2022).
\newblock Meta-{{Learning Hypothesis Spaces}} for {{Sequential Decision-making}}.
\newblock {\em ArXiv}.

\bibitem[Kveton et~al., 2021]{kveton2021meta}
Kveton, B., Konobeev, M., Zaheer, M., Hsu, C.-w., Mladenov, M., Boutilier, C., and Szepesvari, C. (2021).
\newblock Meta-thompson sampling.
\newblock In {\em International Conference on Machine Learning}. PMLR.

\bibitem[Kveton et~al., 2020]{kveton2020meta}
Kveton, B., Mladenov, M., Hsu, C.-W., Zaheer, M., Szepesvari, C., and Boutilier, C. (2020).
\newblock Meta-learning bandit policies by gradient ascent.
\newblock {\em arXiv e-prints}.

\bibitem[Lai and Wei, 1982]{Lai_1982}
Lai, T.~L. and Wei, C.~Z. (1982).
\newblock Least squares estimates in stochastic regression models with applications to identification and control of dynamic systems.
\newblock {\em The Annals of Statistics}.

\bibitem[Langford and Zhang, 2007]{langford2007epoch}
Langford, J. and Zhang, T. (2007).
\newblock The epoch-greedy algorithm for multi-armed bandits with side information.
\newblock {\em Advances in neural information processing systems}.

\bibitem[Li et~al., 2010]{li2010contextual}
Li, L., Chu, W., Langford, J., and Schapire, R.~E. (2010).
\newblock A contextual-bandit approach to personalized news article recommendation.
\newblock In {\em Proceedings of the 19th international conference on World wide web}.

\bibitem[Mezzadri, 2006]{mezzadri2006generate}
Mezzadri, F. (2006).
\newblock How to generate random matrices from the classical compact groups.
\newblock {\em arXiv preprint math-ph/0609050}.

\bibitem[Nourani-Koliji et~al., 2022]{Nourani22:LCS}
Nourani-Koliji, B., Ghoorchian, S., and Maghsudi, S. (2022).
\newblock Linear combinatorial semi-bandit with causally related rewards.
\newblock In {\em Proceedings of the Thirty-First International Joint Conference on Artificial Intelligence, {IJCAI-22}}.

\bibitem[Peleg et~al., 2022]{peleg2022Metalearning}
Peleg, A., Pearl, N., and Meir, R. (2022).
\newblock Metalearning {{Linear Bandits}} by {{Prior Update}}.
\newblock In {\em Proceedings of {{The}} 25th {{International Conference}} on {{Artificial Intelligence}} and {{Statistics}}}.

\bibitem[Robbins, 1952]{robbins1952some}
Robbins, H. (1952).
\newblock Some aspects of the sequential design of experiments.
\newblock {\em Bulletin of the American Mathematical Society}.

\bibitem[Rothfuss et~al., 2021]{rothfuss2021pacoh}
Rothfuss, J., Fortuin, V., Josifoski, M., and Krause, A. (2021).
\newblock Pacoh: Bayes-optimal meta-learning with pac-guarantees.
\newblock In {\em International Conference on Machine Learning}. PMLR.

\bibitem[Russo et~al., 2018]{russo2018tutorial}
Russo, D.~J., Van~Roy, B., Kazerouni, A., Osband, I., Wen, Z., et~al. (2018).
\newblock A tutorial on thompson sampling.
\newblock {\em Foundations and Trends{\textregistered} in Machine Learning}.

\bibitem[Schur et~al., 2022]{schur2022Lifelong}
Schur, F., Kassraie, P., Rothfuss, J., and Krause, A. (2022).
\newblock Lifelong {{Bandit Optimization}}: {{No Prior}} and {{No Regret}}.

\bibitem[Smale and Zhou, 2007]{Smale2007LearningTE}
Smale, S. and Zhou, D.-X. (2007).
\newblock Learning theory estimates via integral operators and their approximations.
\newblock {\em Constructive Approximation}.

\bibitem[Thompson, 1933]{thompson1933likelihood}
Thompson, W.~R. (1933).
\newblock On the likelihood that one unknown probability exceeds another in view of the evidence of two samples.
\newblock {\em Biometrika}.

\bibitem[Thrun, 1998]{thrun1998lifelong}
Thrun, S. (1998).
\newblock Lifelong learning algorithms.
\newblock In {\em Learning to learn}, pages 181--209. Springer.

\bibitem[Vershynin, 2012]{vershynin_2012}
Vershynin, R. (2012).
\newblock {\em Introduction to the non-asymptotic analysis of random matrices}.
\newblock Cambridge University Press.

\bibitem[Williams, 1992]{williams1992simple}
Williams, R.~J. (1992).
\newblock Simple statistical gradient-following algorithms for connectionist reinforcement learning.
\newblock {\em Machine learning}.

\bibitem[Yang et~al., 2020a]{yang2020impact}
Yang, J., Hu, W., Lee, J.~D., and Du, S.~S. (2020a).
\newblock Impact of representation learning in linear bandits.
\newblock {\em arXiv preprint arXiv:2010.06531}.

\bibitem[Yang and Toni, 2020]{yang2020differentiable}
Yang, K. and Toni, L. (2020).
\newblock Differentiable linear bandit algorithm.
\newblock {\em arXiv preprint arXiv:2006.03000}.

\bibitem[Yang et~al., 2020b]{yang2020laplacian}
Yang, K., Toni, L., and Dong, X. (2020b).
\newblock Laplacian-regularized graph bandits: Algorithms and theoretical analysis.
\newblock In {\em International Conference on Artificial Intelligence and Statistics}. PMLR.

\bibitem[Yu et~al., 2015]{yu_davis_kahan_2015}
Yu, Y., Wang, T., and Samworth, R.~J. (2015).
\newblock A useful variant of the davis—kahan theorem for statisticians.
\newblock {\em Biometrika}.

\end{thebibliography}

\onecolumn
\appendix
\section{Notations}
\begin{table}[htb]
    \centering
    \begin{tabular}{l|l}
        Notation & Meaning \\
        \hline 
        $a$, $a^*$ & Arm and optimal arm yielding highest mean reward respectively\\
        $\mathcal{A}_k$ & Set of available arms\\
        $\x_a$ & Context vector associated with arm $a$ \\
        $d$ & Dimension of the context vectors\\
        $n$ & Horizon \\
        $r_k$ & Immediate reward at round $k$\\
        $\epsilon_k$ & Subgaussian noise added to the reward at round $k$\\
        $\D_k$ & Vertical concatenation of up to round $k$ collected context vectors $\x_{a}^{\top}$\\
        $\y_k$ & Concatenation of up to round $k$ collected rewards\\
        $\gamma_k$ & Upper confidence set bound of LinUCB or projected LinUCB algorithm in round $k$\\
        $v$ & Scaling factor for the covariance of the Thompson Sampling posterior\\
        $\lambda$, $\lambda_1$, $\lambda_2$ & Regularization parameters for ridge and projection based estimators\\
        $\rho$ & Task distribution\\
        $\Sigmabf$ & True covariance of $\rho$\\
        $\{\sigma_j\}_{j\in\{1,...,d\}}$ & Eigenvalues of $\Sigmabf$ \\
        $\Delta \sigma$ & Eigengap of $\Sigmabf$\\
        $\P$, $\hat{\P}$ & True subspace projection and its estimation respectively\\
        $\P^{\perp}$, $\hat{\P}^{\perp}$ & $\I-\P$ and $\I-\hat{\P}$ respectively \\
        $S_{k}^{\lambda_1,\lambda_2}$ & $p \log(1+\frac{k}{p\lambda_2})+q \log(1+\frac{k}{q\lambda_1})$ \\
        $\thetabf^*$, $\thetabf$, $\hat{\thetabf}$ & True task parameter, its ridge estimator and projections based estimator respectively \\
        $\Bar{\thetabf}$ & Mean of $t$ collected ridge estimations of true task parameters: $\frac{1}{t}\sum_i^t\thetabf(i)$ \\
        $p$ & Rank of $\hat{\P}$\\
        $q$ & Rank of $\hat{\P}^{\perp}$\\
        $\w$ & $\hat{\P}^{\perp}\Bar{\thetabf}$\\
        $\A_k$ & $\lambda\I+\D_k^{\top}\D_k$\\
        $\B_k$ & $\lambda_1\hat{\P}^{\perp}+\lambda_2\hat{\P}+\D_k^{\top}\D_k$\\
        $\b_k'$ & $\D_k^{\top}\y_k$\\
        $\b_k$ & $\D_k^{\top}\y_k+\lambda_1\hat{\P}^{\perp}\Bar{\thetabf}$ \\
        $V$ & Upper Bound on the norm of any true task parameter \\
        $W$ & $\norm{\hat{\P}^{\perp}(\thetabf^*-\Bar{\thetabf})}$\\
        $Y$ & Upper bound of $\mathbb{E}_{\thetabf^*\sim\rho}\left[\mathbb{E}\left[\norm{\hat{\P}^{\perp}\left(\Bar{\thetabf}-\thetabf^*\right)}\right]\right]^2$ \\
        $\alpha$ & Hyper parameter of Thompson Sampling algorithm\\
        $l_n$ &  $\sqrt{2\log(\frac{1}{\delta}) (d+2)\log(n)+2K^2}$\\
        $g_n$ & $\sqrt{2d+6\log(n)}v+l_n$\\
        $\mathcal{R}$ & Expected transfer regret \\
        $\norm{\cdot}$ & Euclidean norm \\
        $\norm{\cdot}_{\A}$ & Weighted norm: $\norm{\x}_{\A}=\sqrt{\x^{\top}\A\x}$
    \end{tabular}
    \caption{Table of Notations}
    \label{tab:notations}
\end{table}
\newpage
\section{Proof of \texorpdfstring{\Cref{theorem:proj_ucb_bound}}{}}
In order to prove \Cref{theorem:proj_ucb_bound} in the main paper we provide proofs of additional Lemmas here or refer to the original works:

\begin{proof}[Proof of Lemma \ref{lemma:self-concentrated-bound}]
Given Lemma 9 of \cite{abbasi2011improved}, we have:

\begin{equation}
    \norm{\etabf_k}^2_{\B^{-1}_k}\leq \log(\frac{\det(\B_k)}{\delta^2\det(\lambda_1\hat{\P}^{\perp}+\lambda_2\hat{\P})}),
\end{equation}

where the term $\det(\lambda_1\hat{\P}^{\perp}+\lambda_2\hat{\P})$ can be further evaluated knowing the eigenvalues of the matrix $\lambda_1\hat{\P}^{\perp}+\lambda_2\hat{\P}$. With orthogonal projections $\hat{\P}$ and $\hat{\P}^{\perp}$ and $\hat{\P}^{\perp}=\I-\hat{\P}$ it holds that for any eigenvector $\e_{\P}$ of $\hat{\P}$ we have: $\hat{\P}^{\perp}\e_{\P}=(\I-\hat{\P})\e_{\P}=\nul$ and vice versa for any eigenvector $\e_{\P^{\perp}}$ of $\hat{\P}^{\perp}$: $\hat{\P}\e_{\P^{\perp}}=\nul$. Thus any eigenvector of $\hat{\P}$ or $\hat{\P}^{\perp}$ is also an eigenvector of $\lambda_1\hat{\P}^{\perp}+\lambda_2\hat{\P}$:

\begin{align*}
    (\lambda_1\hat{\P}^{\perp}+\lambda_2\hat{\P})\e_{\P}&=(0+\lambda_2)\e_{\P},\\(\lambda_1\hat{\P}^{\perp}+\lambda_2\hat{\P})\e_{\P^{\perp}}&=(\lambda_1+0)\e_{\P^{\perp}},
\end{align*}

with eigenvalues $\lambda_1$ and $\lambda_2$. Lastly we require the multiplicities of both eigenvalues given by the dimension of nullspaces of the matrices $\lambda_1\hat{\P}^{\perp}+\lambda_2\hat{\P}-\lambda_1\I=(\lambda_2-\lambda_1)\hat{\P}$ for $\lambda_1$ and $\lambda_1\hat{\P}^{\perp}+\lambda_2\hat{\P}-\lambda_2\I=(\lambda_1-\lambda_2)\hat{\P}^{\perp}$ for $\lambda_2$, which are $q=\rank(\hat{\P}^{\perp})$ and $p=\rank(\hat{\P})$ respectively. Thus we get:
\begin{equation}
    \det(\lambda_1\hat{\P}^{\perp}+\lambda_2\hat{\P})=\lambda_1^q\lambda_2^p,
\end{equation}
finalizing our proof.
\end{proof}

\begin{proof}[Proof of Lemma \ref{lemma:det_bound}]
    Let $\lambda'_i$ be the singular values of $\D^\top\D$ and $\norm{\x_a}\leq 1$ then we have:
    \begin{align*}
        \log\left(\frac{\det(\B)}{\det(\lambda_1\hat{\P}^{\perp}+\lambda_2 \hat{\P})}\right)&\leq \sum_{i=1}^p\log(1+\frac{\lambda'_i}{\lambda_2})+\sum_{i=p+1}^d\log(1+\frac{\lambda'_i}{\lambda_1})\\
        &\leq p\log(1+\frac{1}{p\lambda_2}\sum_{i=1}^p\lambda'_i)+q\log(1+\frac{1}{q\lambda_1}\sum_{i=p+1}^d\lambda'_i)\\
        &\leq p\log(1+\frac{k}{p\lambda_2})+q\log(1+\frac{k}{q\lambda_1})
    \end{align*}
    where we applied the Jensen inequality in the second inequality and bounded the trace by $k\norm{\x_{a_k}}^2\leq k$ in the last inequality.
\end{proof}

\begin{proof}[Proof of Lemma \ref{lemma:confidence_set_bound}]
        We leave out the subscript $k$ during the proof for readability purposes.
        Our estimation of $\thetabf^*$ for the projected LinUCB algorithm yields:
           \begin{equation}
                \hat{\thetabf} = \B^{-1}\left(\D^\top\y+\lambda_1\w\right),
            \end{equation}
            thus we can write:
            \begin{align*}
                \norm{\hat{\thetabf}-\thetabf^*}_{\B} &= \norm{\B^{-1}\left(\D^\top\y+\lambda_1\w\right)-\thetabf^*}_{\B} \\
                &= \norm{\B^{-1}\left(\D^\top(\D\thetabf^*+\epsilonbf)+\lambda_1\w\right)-\thetabf^*}_{\B} \\
                &= \norm{\B^{-1}\left(\D^\top\epsilonbf+\lambda_1\w\right) - \B^{-1}\left(\lambda_1\hat{\P}^{\perp}+\lambda_2\hat{\P}\right)\thetabf^*}_{\B} \\
                &= \norm{\D^\top\epsilonbf+\lambda_1\w-\left(\lambda_1\hat{\P}^{\perp}+\lambda_2\hat{\P}\right)\thetabf^*}_{\B^{-1}} \\
                &\leq \norm{\D^\top\epsilonbf}_{\B^{-1}} + \lambda_1\norm{\hat{\P}^{\perp}\left(\Bar{\thetabf}-\thetabf^*\right)}_{\B^{-1}} + \lambda_2\norm{\hat{\P}\thetabf^*}_{\B^{-1}} \\
                &\leq \sqrt{2\log\left(\frac{\sqrt{\det(\B)}}{\delta\sqrt{\det(\lambda_1\hat{\P}^{\perp}+\lambda_2\hat{\P})}}\right)} + \frac{\lambda_1}{\lambda_{\min}(\B)}\norm{\hat{\P}^{\perp}\left(\Bar{\thetabf}-\thetabf^*\right)} + \frac{\lambda_2}{\lambda_{\min}(\B)}\norm{\hat{\P}\thetabf^*} \\
                &\leq \sqrt{2\log\left(\frac{\sqrt{\det(\B)}}{\delta\sqrt{\det(\lambda_1\hat{\P}^{\perp}+\lambda_2\hat{\P})}}\right)} +\sqrt{\lambda_2}V+\frac{\lambda_1}{\sqrt{\lambda_2}}W \\
                &\leq \sqrt{p\log(1+\frac{k}{p\lambda_2})+q\log(1+\frac{k}{q\lambda_1})+\log(\frac{1}{\delta^2})} +\sqrt{\lambda_2}V+\frac{\lambda_1}{\sqrt{\lambda_2}}W,
            \end{align*}
            
            where we used Lemma \ref{lemma:self-concentrated-bound} in the second inequality. Here, $\lambda_{\min}()$ is a function returning the minimal eigenvalue of a given matrix.
    \end{proof}

For the upper bound on the projection based error term in Lemma \ref{projection_error}, need to make some definitions: We denote $\mubf$ as the true mean of the distribution of tasks $\rho$, $\Bar{\thetabf}^*=\frac{1}{t}\sum_{i=1}^t\thetabf^*(i)$ as the mean estimated by the true task parameters and $\Bar{\thetabf}=\frac{1}{t}\sum_{i=1}^t\thetabf(i)$ as the mean estimated by the $L2-$regularized ridge estimators. We define $\sigma_1\geq\sigma_2\geq...\geq\sigma_d$ as ordered eigenvalues of the true covariance matrix $\Sigmabf$ and $\Deltabf=\thetabf^*-\mubf$ as random variable with mean zero and $\xibf=\mubf - \Bar{\thetabf}$ as difference between the estimated and true mean, furthermore we define the covariance matrices $\Sigmabf,\Sigmabf^*=\frac{1}{t}\sum_{i=1}^t(\thetabf^*(i)-\Bar{\thetabf}^*)(\thetabf^*(i)-\Bar{\thetabf}^*)^\top,\hat{\Sigmabf}=\frac{1}{t}\sum_{i=1}^t(\thetabf(i)-\Bar{\thetabf})(\thetabf(i)-\Bar{\thetabf})^\top$ as the true covariance matrix, the covariance estimated by $\thetabf^*(i)$ and the covariance estimated by $\hat{\thetabf}(i)$. We also define vertical concatenations $\U=[\u_j^\top]_{j\in\{1,...,p\}}^\top$, $\U^*=[\u_j^{*\top}]_{j\in\{1,...,p\}}^\top$ and $\hat{\U}=[\hat{\u}_j^\top]_{j\in\{1,...,p\}}^\top$, with $\{\u_1,...,\u_p\}$, $\{\u^*_1,...,\u^*_p\}$, $\{\hat{\u}_1,...,\hat{\u}_p\}$ being the eigenvectors corresponding to the $p$ largest eigenvalues of $\Sigmabf$, $\Sigmabf^*$ and $\hat{\Sigmabf}$ respectively. Similarly we define $\P=\U\U^\top,\P^*=\U^*\U^{*\top},\hat{\P}=\hat{\U}\hat{\U}^\top$ as the true projection, the projection estimated by the true task parameters $\thetabf^*(i)$ and the projection estimated by $\thetabf(i)$ respectively. For the following parts we need to define the matrix norms: We denote the matrix norm $\norm{\cdot}$ as the spectral norm and $\norm{\cdot}_F$ as the Frobenius norm.
We also require some auxiliary Lemmas:

\begin{lemma}[\cite{Smale2007LearningTE}]
    Let $\thetabf^*(1),...,\thetabf(t)^*\in\mathbb{R}^d$ be vector valued random variables sampled from a distribution $\rho$ with true mean $\mubf$ and $\norm{\thetabf^*(i)}\leq V$, $\forall i \in \{1,...,t\}$. Then the following holds with probability $1-\delta$:

    \begin{equation*}
        \norm{\Bar{\thetabf}^*-\mubf}\leq \frac{2\log(\frac{2}{\delta})V}{t}+\sqrt{\frac{2\log(\frac{2}{\delta})\mathrm{Var_{\max}}}{t}},
    \end{equation*}

    with $\mathrm{Var_{\max}}=\mathbb{E}\left[\norm{\Deltabf}^2\right]=\tr(\Sigmabf)$ as the total variance of distribution $\rho$.
    \label{lemma:mean_concentration}
\end{lemma}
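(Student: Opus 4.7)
\textbf{Proof proposal for Lemma \ref{lemma:mean_concentration}.} The plan is to recover this as a routine vector-valued concentration result, along the lines of the original Smale--Zhou (2007) argument, via a Bernstein-type inequality for sums of bounded Hilbert-space-valued random variables. Set $\xibf_i \coloneqq \thetabf^*(i) - \mubf$. These are i.i.d., mean zero, almost surely bounded by $\norm{\xibf_i}\leq 2V$ (triangle inequality together with $\norm{\mubf}\leq\mathbb{E}\norm{\thetabf^*}\leq V$), and their second moment is exactly $\mathbb{E}\norm{\xibf_i}^2 = \tr(\Sigmabf) = \mathrm{Var}_{\max}$. The object to bound is $\Bar{\thetabf}^* - \mubf = \tfrac{1}{t}\sum_{i=1}^t \xibf_i$.

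First I would invoke the Pinelis/Yurinskii Bernstein-type inequality for Hilbert-space-valued i.i.d.\ sums: for every $\epsilon>0$,
\begin{equation*}
\Pr\!\left[\,\norm{\tfrac{1}{t}\textstyle\sum_{i=1}^t \xibf_i} \geq \epsilon\,\right]
\;\leq\; 2\exp\!\left(-\frac{t\,\epsilon^2}{2(\mathrm{Var}_{\max}+ 2V\epsilon)}\right).
\end{equation*}
Setting the right-hand side equal to $\delta$ yields a quadratic in $\epsilon$ whose solution I would bound by the standard ``Bernstein inversion'' trick: split the denominator according to which term dominates and use $\sqrt{a+b}\leq \sqrt{a}+\sqrt{b}$ to separate the sub-Gaussian contribution (scaling like $\sqrt{\mathrm{Var}_{\max}\log(2/\delta)/t}$) from the sub-exponential contribution (scaling like $V\log(2/\delta)/t$). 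Summing the two contributions reproduces, up to absolute constants, the stated two-term bound
$\norm{\Bar{\thetabf}^*-\mubf}\leq \tfrac{2\log(2/\delta)V}{t}+\sqrt{\tfrac{2\log(2/\delta)\mathrm{Var}_{\max}}{t}}$.

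The only genuinely non-trivial input is the vector-valued Bernstein inequality itself, which I would simply cite (Pinelis 1994, or the explicit form in Smale--Zhou 2007); after that, the remaining work is the algebraic inversion, which is elementary. Since the lemma is attributed verbatim to \cite{Smale2007LearningTE}, no additional argument beyond this reduction is required, and the main challenge — to the extent there is one — lies in correctly tracking constants through the quadratic inversion so as to obtain the clean two-term form displayed in the statement.
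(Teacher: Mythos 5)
Your proposal is correct and follows essentially the same route as the cited source: the paper does not reprove this lemma but imports it verbatim from Smale--Zhou, whose argument is exactly the Pinelis-type Bernstein inequality for bounded Hilbert-space-valued i.i.d.\ sums followed by the standard quadratic inversion. The only caveat is bookkeeping: keeping the usual $M\epsilon/3$ (rather than $M\epsilon$) in the Bernstein denominator is what lets the first term come out as $2V\log(2/\delta)/t$ even though the centered variable is bounded by $2V$, but this affects only absolute constants and not the way the lemma is used in the paper.
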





\begin{lemma}[Corollary 5.50 of \cite{vershynin_2012}]
    Consider a subgaussian distribution in $\mathbb{R}^d$ with true covariance $\Sigmabf$ and the covariance $\Sigmabf^*$ estimated from $t$ samples as it was defined above. Let $\delta\in(0,1)$, then we have with probability $1-\delta$:

    \begin{equation*}
        \norm{\Sigmabf-\Sigmabf^*}\leq \sqrt{C\frac{\log(2/\delta)}{t}},
    \end{equation*}

    with $C$ as an absolute constant.

    \label{lemma:covariance_concentration_inequality}
\end{lemma}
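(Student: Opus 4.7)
The plan is to follow the classical proof of Corollary 5.50 in Vershynin's book, which combines an epsilon-net argument on the unit sphere with a Bernstein-type concentration inequality for sub-exponential random variables. After centering (WLOG replace each sample by $\thetabf^*(i) - \mubf$; the difference between centering at $\mubf$ versus $\Bar{\thetabf}^*$ is a rank-one perturbation of size $\norm{\Bar{\thetabf}^* - \mubf}^2$, which is lower order by Lemma \ref{lemma:mean_concentration}), write
\begin{equation*}
  \Sigmabf^* - \Sigmabf = \frac{1}{t}\sum_{i=1}^t \left( (\thetabf^*(i)-\mubf)(\thetabf^*(i)-\mubf)^\top - \Sigmabf \right)
\end{equation*}
and use the variational characterization $\norm{\Sigmabf^* - \Sigmabf} = \sup_{\norm{\x}=1} \left| \x^\top (\Sigmabf^* - \Sigmabf)\x \right|$.

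Next I would replace this supremum by a maximum over a $1/4$-net $\mathcal{N}$ of the unit sphere, whose cardinality is at most $9^d$. The standard net estimate gives $\norm{\Sigmabf^* - \Sigmabf} \leq 2 \max_{\x \in \mathcal{N}} \left| \x^\top (\Sigmabf^* - \Sigmabf)\x \right|$. For a fixed unit vector $\x$, the random variables $Z_i(\x) \coloneqq \langle \x, \thetabf^*(i)-\mubf \rangle^2$ are i.i.d.\ sub-exponential (being squares of subgaussian random variables with sub-exponential norm controlled by the squared subgaussian constant $K^2$). Bernstein's inequality for sums of independent sub-exponential variables then yields
\begin{equation*}
  \mathrm{Pr}\!\left( \left| \frac{1}{t}\sum_{i=1}^t Z_i(\x) - \mathbb{E}[Z_1(\x)] \right| \geq s \right) \leq 2\exp\!\left( -c\, t\, \min\!\left( \frac{s^2}{K^4}, \frac{s}{K^2} \right) \right).
\end{equation*}

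Finally I would union-bound over the $9^d$ points of $\mathcal{N}$, paying a factor of $\exp(d\log 9)$ in the tail. Setting the right-hand side equal to $\delta$ and solving for $s$ in the Gaussian (small-deviation) regime of Bernstein's bound produces a deviation of order $K^2 \sqrt{(d + \log(2/\delta))/t}$, which after absorbing the dimension and the subgaussian constant into a single absolute constant $C$ matches the claimed bound $\sqrt{C \log(2/\delta)/t}$.

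The main obstacle is the two-regime nature of Bernstein's tail: one must verify that $t$ is large enough so that the relevant deviation $s$ lies in the Gaussian regime $s \leq K^2$, since otherwise the scaling in $1/t$ would degrade from $1/\sqrt{t}$ to $1/t$. A secondary subtlety is the clean handling of centering at the empirical mean, which introduces an extra rank-one term whose contribution must be absorbed into the same bound via Lemma \ref{lemma:mean_concentration}; the net-plus-Bernstein skeleton itself is routine once these bookkeeping issues are dispatched.
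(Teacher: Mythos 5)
Your proof is correct and is precisely the standard net-plus-Bernstein argument behind Vershynin's Corollary 5.50; the paper itself offers no proof of this lemma and simply imports the result by citation. The only caveat --- inherited from the paper's loose restatement rather than introduced by you --- is that the deviation you actually derive is of order $K^2\sqrt{(d+\log(2/\delta))/t}$, so the ``absolute'' constant $C$ in the displayed bound tacitly absorbs the dimension $d$ and the subgaussian norm (controlled here by $V$), which you flag honestly.
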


\begin{lemma}[Theorem 2 in \cite{yu_davis_kahan_2015}]
    Let $\Sigmabf,\hat{\Sigmabf}\in\mathbb{R}^{d\times d}$ be two symmetric matrices with eigenvalues $\sigma_1\geq...\geq\sigma_d$ and $\hat{\sigma}_1\geq...\geq\hat{\sigma}_d$ respectively. Fix $1\leq p \leq d$ and let $\U=[\u_1,...,\u_p]$ and $\hat{\U}=[\hat{\u}_1,...,\hat{\u}_p]$ with eigenvectors $\u_i$ and $\hat{\u}_i$ of matrices $\Sigmabf$ and $\hat{\Sigmabf}$ respectively. Assume that the eigengap satisfies $\Delta_{\sigma}=\sigma_p-\sigma_{p+1}>0$, then there exists an orthogonal matrix $\O$ such that the the following holds:

    \begin{equation*}
        \norm{\U-\hat{\U}\O}_F \leq \frac{\sqrt{8}\min\left(\sqrt{p}\norm{\Sigmabf-\hat{\Sigmabf}},\norm{\Sigmabf-\hat{\Sigmabf}}_F\right)}{\Delta_{\sigma}},
    \end{equation*}
    
    \label{lemma:davis_kahan_theorem}
\end{lemma}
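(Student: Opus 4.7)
The plan is to reproduce the proof of Theorem 2 of \cite{yu_davis_kahan_2015}, which combines two ingredients: (i) a Frobenius-norm Davis-Kahan $\sin\Theta$ bound on the canonical angles between the $p$-dimensional subspaces spanned by $\U$ and $\hat{\U}$, and (ii) an orthogonal Procrustes lemma that converts such an angular bound into the stated two-sided distance $\norm{\U-\hat{\U}\O}_F$ for an appropriately chosen $\O$. The constant $\sqrt{8}$ arises from compounding the two steps.

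For ingredient (i), let $\sin\Theta(\U,\hat{\U})$ denote the diagonal matrix of sines of the principal angles between the column spans of $\U$ and $\hat{\U}$. The classical Davis-Kahan theorem in its Frobenius form gives
\begin{equation*}
\norm{\sin\Theta(\U,\hat{\U})}_F \leq \frac{\norm{(\hat{\Sigmabf}-\Sigmabf)\U}_F}{\sigma_p-\hat{\sigma}_{p+1}},
\end{equation*}
whenever the denominator is positive. Because $\U$ has $p$ orthonormal columns, the numerator is at most $\min\bigl(\sqrt{p}\,\norm{\hat{\Sigmabf}-\Sigmabf},\,\norm{\hat{\Sigmabf}-\Sigmabf}_F\bigr)$. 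To replace the denominator by the intrinsic gap $\Delta_{\sigma}$, I would apply Weyl's inequality $|\hat{\sigma}_{p+1}-\sigma_{p+1}|\leq\norm{\hat{\Sigmabf}-\Sigmabf}$ and split into two cases. If $\norm{\hat{\Sigmabf}-\Sigmabf}\leq\Delta_{\sigma}/2$, the denominator is at least $\Delta_{\sigma}/2$ and the desired bound follows with an extra factor of two. If instead $\norm{\hat{\Sigmabf}-\Sigmabf}>\Delta_{\sigma}/2$, the trivial inequality $\norm{\sin\Theta(\U,\hat{\U})}_F\leq\sqrt{p}$ (each principal angle is at most $\pi/2$) is already dominated by $2\sqrt{p}\,\norm{\hat{\Sigmabf}-\Sigmabf}/\Delta_{\sigma}$, so the same bound holds unconditionally.

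For ingredient (ii), I would take $\O=\mathbf{V}_2\mathbf{V}_1^\top$, where $\U^\top\hat{\U}=\mathbf{V}_1\mathbf{C}\mathbf{V}_2^\top$ is the singular value decomposition and $\mathbf{C}$ is the diagonal matrix of cosines of principal angles $\theta_1,\dots,\theta_p$. A direct computation gives
\begin{equation*}
\norm{\U-\hat{\U}\O}_F^2 = 2\,\mathrm{tr}(\I-\mathbf{C}) = 2\sum_{i=1}^{p}(1-\cos\theta_i) \leq 2\sum_{i=1}^{p}\sin^2\theta_i = 2\norm{\sin\Theta(\U,\hat{\U})}_F^2,
\end{equation*}
using $\cos\theta_i(1-\cos\theta_i)\geq 0$ for $\theta_i\in[0,\pi/2]$. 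Hence $\norm{\U-\hat{\U}\O}_F\leq\sqrt{2}\,\norm{\sin\Theta(\U,\hat{\U})}_F$; composed with the bound from step (i), this yields the desired factor $2\sqrt{2}=\sqrt{8}$.

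The most delicate step is ingredient (i): the classical Davis-Kahan statement naturally involves the mixed gap $\sigma_p-\hat{\sigma}_{p+1}$, which can shrink or even collapse under large perturbations, and the Weyl-plus-case-split argument is what replaces it by the intrinsic $\Delta_{\sigma}$ at the cost of only a factor of two. Ingredient (ii) is essentially an orthogonal Procrustes identity and is short once one writes down the SVD; nonetheless, it is what upgrades the subspace-level $\sin\Theta$ bound into the eigenvector-matrix statement required in the lemma, and it is where the particular optimal orthogonal $\O$ is identified.
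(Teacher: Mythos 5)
This lemma is not proved in the paper at all: it is imported verbatim as Theorem~2 of the cited reference (Yu, Wang and Samworth, 2015), so there is no in-paper argument to compare against. Your reconstruction follows the natural route (classical Davis--Kahan $\sin\Theta$ bound, Weyl's inequality to pass to the intrinsic gap, and the orthogonal Procrustes step to produce $\O$), and your ingredient (ii) is correct and complete: with $\O=\mathbf{V}_2\mathbf{V}_1^\top$ one indeed gets $\norm{\U-\hat{\U}\O}_F^2=2\sum_i(1-\cos\theta_i)\leq 2\norm{\sin\Theta}_F^2$.

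There is, however, a genuine gap in Case~2 of your ingredient (i). The conclusion requires the bound to hold with the \emph{minimum} $\min\bigl(\sqrt{p}\norm{\hat{\Sigmabf}-\Sigmabf},\norm{\hat{\Sigmabf}-\Sigmabf}_F\bigr)$ in the numerator, i.e., it must hold for both branches. When $\norm{\hat{\Sigmabf}-\Sigmabf}>\Delta_{\sigma}/2$, your trivial bound $\norm{\sin\Theta}_F\leq\sqrt{p}$ is dominated by $2\sqrt{p}\norm{\hat{\Sigmabf}-\Sigmabf}/\Delta_{\sigma}$, which settles the operator-norm branch, but it does \emph{not} settle the Frobenius branch: from $\norm{\hat{\Sigmabf}-\Sigmabf}_F\geq\norm{\hat{\Sigmabf}-\Sigmabf}>\Delta_{\sigma}/2$ you only get $2\norm{\hat{\Sigmabf}-\Sigmabf}_F/\Delta_{\sigma}>1$, which is weaker than $\sqrt{p}$ whenever $p\geq 2$. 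So as written your argument proves $\norm{\sin\Theta}_F\leq 2\sqrt{p}\norm{\hat{\Sigmabf}-\Sigmabf}/\Delta_{\sigma}$ unconditionally but the Frobenius version only in the small-perturbation regime. The standard fix (and essentially what the cited source does) is a variational argument that handles the Frobenius branch unconditionally: since $\hat{\U}$ maximizes $\tr(\hat{\Sigmabf}\,\cdot\,)$ over rank-$p$ projections, $\tr\bigl(\hat{\Sigmabf}(\hat{\U}\hat{\U}^\top-\U\U^\top)\bigr)\geq 0$, while the curvature inequality $\tr\bigl(\Sigmabf(\U\U^\top-\hat{\U}\hat{\U}^\top)\bigr)\geq\Delta_{\sigma}\norm{\sin\Theta}_F^2$ holds for the top-$p$ eigenspace; adding these and using $\norm{\hat{\U}\hat{\U}^\top-\U\U^\top}_F=\sqrt{2}\norm{\sin\Theta}_F$ gives $\Delta_{\sigma}\norm{\sin\Theta}_F^2\leq\sqrt{2}\norm{\hat{\Sigmabf}-\Sigmabf}_F\norm{\sin\Theta}_F$, i.e., the Frobenius branch with constant $\sqrt{2}\leq 2$. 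Note that downstream the paper only ever uses the operator-norm branch of the minimum, so the gap is harmless for the application, but it should be closed if you claim the lemma as stated.
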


\begin{proof}[Proof of Lemma \ref{projection_error}]
    For the proof we will use the triangular inequality to express the bound in terms of the true variance along the orthogonal subspace, the projected mean estimation error and the projection estimation error. For the mean estimation error we apply an additional triangular inequality in order to estimate it with respect to the true mean estimation error $\norm{\P^{\perp}(\mubf-\Bar{\thetabf}^*)}$ and the error $\norm{\Bar{\thetabf}^*-\Bar{\thetabf}}$, with the former being a simple concentration bound and the latter being estimated from the oracle inequality for $\thetabf$. We intend to express the projection error with respect to the estimation error on the covariance matrix. Bounding the term $\norm{\P-\hat{\P}}$ requires the Davis-Kahan Theorem. Thus we begin the proof:

    \begin{align}
        \norm{\hat{\P}^{\perp}(\thetabf^*-\Bar{\thetabf})}&\leq \norm{\hat{\P}^{\perp}-{{\P}^{\perp}}} \norm{\thetabf^*-\Bar{\thetabf}}+\norm{\P^{\perp}(\thetabf^*-\Bar{\thetabf})}\\
        &\leq \norm{\hat{\P}^{\perp}-\P^{\perp}}\norm{\thetabf^*-\Bar{\thetabf}}+\norm{\P^{\perp}\Deltabf}+\norm{\P^{\perp}\xibf}\\
        &\leq  2V\norm{\hat{\P}-{{\P}}}+\norm{\P^{\perp}\Deltabf}+\norm{\P^{\perp}\xibf},
        \label{eq:reconstruction_error_estimate}
    \end{align}

    where we used $\P_i^{\perp}-\P_j^{\perp}=\P_i-\P_j$ for all projection matrices $\P_i,\P_j$ in the last inequality.
    We deliver an upper bound on all of these terms separately:
    The second term being straight forward with $\P^{\perp}$ as the true orthogonal projection:

    \begin{equation}
        \mathbb{E}_{\thetabf^*\sim \rho}\left[\norm{\P^{\perp}\Deltabf}^2\right]=\mathrm{Var}_{\rho},
    \end{equation}
    
    with $\mathrm{Var}_{\rho}$ denoting the low variance of distribution $\rho$ along the orthogonal subspace. This holds simply due to our problem setting.

    The last term yields the mean estimation error of tasks along the orthogonal subspace which was similarly discussed in \cite{cella2020meta}:

    \begin{equation}
        \norm{\P^{\perp}(\mubf-\Bar{\thetabf})}\leq \norm{\P^{\perp}(\mubf-\Bar{\thetabf}^*)}+\norm{(\Bar{\thetabf}^*-\Bar{\thetabf})}
        \label{eq:mean_estimation_error}
    \end{equation}

    The first term can simply be bounded by a concentration inequality which was also discussed in Lemma 3 of \cite{cella2020meta} by using Lemma \ref{lemma:mean_concentration} we state with probability of $1-\delta$ that the following holds:

    \begin{equation*}
        \norm{\P^{\perp}(\mubf-\Bar{\thetabf}^*)}\leq \frac{2\log(\frac{2}{\delta})}{t}+\sqrt{\frac{2\log(\frac{2}{\delta})\mathrm{Var}_{\rho}}{t}}.
    \end{equation*}

    By choosing $\delta=1/t$ and taking the expectation value with respect to the task distribution, we have:

\begin{equation*}
        \mathbb{E}_{\thetabf^*\sim\rho}\left[\norm{\P^{\perp}\left(\mubf-\Bar{\thetabf}^*\right)}\right]= \mathcal{O}\left( \frac{2\log(2t)}{t}+\sqrt{\frac{2\log(2t)\mathrm{Var}_{\rho}}{t}}\right).
    \end{equation*}

    We will denote $\epsilon_{\mubf}\coloneqq \frac{2\log(2t)}{t}+\sqrt{\frac{2\log(2t)\mathrm{Var}_{\rho}}{t}}$ for the rest of the proof
    As for the second term in \cref{eq:mean_estimation_error}, we assume that all previously learnt tasks were running for at least $n$ rounds and use the subscript $i\in\{1,...,t\}$ to refer to a given task:

    \begin{align}
        \norm{\Bar{\thetabf}^*-\Bar{\thetabf}}&\leq\max_{i}\norm{\thetabf(i)^*-\thetabf(i)}\\
        &\leq \max_{i}\frac{\norm{\thetabf^*(i)-\thetabf(i)}_{\A_{n}(i)}}{\sqrt{\lambda_{\min}(\A_{n}(i))}}\\
        &\leq \frac{1}{\sqrt{\log(n)}}\left( \sqrt{d\log(1+\frac{n}{d\lambda})+\log(\frac{1}{\delta^2})}+\sqrt{\lambda}V\right),
        \label{eq:estimated_mean_estimated_bound}
    \end{align}

    where we used a linear regression result $\lambda_{\min}(\A_n)\geq\log(n)$ from \cite{Lai_1982}. For the most general case, we will keep $\lambda_{\min}=\min_i\lambda_{\min}(\A_n(i))$
    Choosing $\delta=1/n$, $\lambda=\frac{1}{n V^2}$ and taking the expectation with respect to the arm selection process yields:

    \begin{equation}
        \mathbb{E}\left[\norm{\Bar{\thetabf}^*-\Bar{\thetabf}}\right]\leq\mathcal{O}\left(\frac{1}{\lambda_{\min}}\sqrt{d\log(1+\frac{n^2V^2}{d})+2}+\sqrt{\frac{1}{n}}\right).
        \label{eq:expected_estimated_mean_estimated_bound}
    \end{equation}

    We denote $\beta_{d}\coloneqq\frac{1}{\lambda_{\min}}\sqrt{d\log(1+\frac{n^2V^2}{d})+2}+\sqrt{\frac{1}{n}}$
    We note that this upper bound is independent from the task distribution.


    
     What is left is to upper bound the term $\norm{\P-\hat{\P}}$:

    \begin{align*}
        \norm{\P-\hat{\P}} &= \norm{\U\U^{\top}-\hat{\U}\hat{\U}^\top}\\
        &= \norm{\U\U^{\top}-\hat{\U}\O\O^\top\hat{\U}^\top} \\
        &= \norm{\U\U^{\top} + \hat{\U}\O\U^{\top} - \hat{\U}\O\U^{\top}-\hat{\U}\O\O^\top\hat{\U}^\top}\\
        &= \norm{\hat{\U}\O(\U^{\top}-\O^\top\hat{\U}^\top)+(\U-\hat{\U}\O)\U^{\top}}\\
        &\leq \norm{\hat{\U}\O(\U^{\top}-\O^\top\hat{\U}^\top)}+\norm{(\U-\hat{\U}\O)\U^{\top}}\\
        &\leq 2 \norm{\U-\hat{\U}\O}_F,
    \end{align*}
    
    where we used Cauchy-Schwarz in the last inequality and the fact that $\O$ is a orthogonal matrix and $\U^{\top}\U=\hat{\U}^\top\hat{\U}=\I$. Now we are able to apply Lemma \ref{lemma:davis_kahan_theorem}:

\begin{equation}
    \norm{\P-\hat{\P}} \leq \frac{\sqrt{32}\min\left(\sqrt{p}\norm{\Sigmabf-\hat{\Sigmabf}},\norm{\Sigmabf-\hat{\Sigmabf}}_F\right)}{\Delta_{\sigma}}
    \label{eq:projection_estimation_error}
\end{equation}

    Using the triangular inequality we bound the term $\norm{\Sigmabf-\hat{\Sigmabf}}$:

    \begin{equation}
        \norm{\Sigmabf-\hat{\Sigmabf}}\leq \norm{\Sigmabf-\Sigmabf^*}+ \norm{\Sigmabf^*-\hat{\Sigmabf}}.
        \label{eq:triangular_covariance}
    \end{equation}

    The first term in \cref{eq:triangular_covariance} is a simple concentration inequality for covariance matrices. Using Lemma \ref{lemma:covariance_concentration_inequality} we have with probability $1-\delta$:

    \begin{equation*}
        \norm{\Sigmabf^*-\Sigmabf}\leq \sqrt{\frac{C\log(\frac{2}{\delta})}{t}},
    \end{equation*}

    with an absolute constant $C$.
    Setting $\delta=1/t$ and taking the expectation value yields:

    \begin{equation*}
        \mathbb{E}_{\thetabf^*\sim\rho}\left[\norm{\Sigmabf^*-\Sigmabf}\right]= \mathcal{O}\left(\sqrt{\frac{C\log(2t)}{t}}\right).
    \end{equation*}

    We denote $\epsilon_{\Sigmabf}\coloneqq\sqrt{\frac{C\log(2t)}{t}}$.
    Finally we need to bound $\norm{\Sigmabf^*-\hat{\Sigmabf}}$. We denote $\Thetabf^*=\left[\left(\thetabf^*(i)-\Bar{\thetabf}^*\right)^\top\right]_{\{i\in{1,...,t}\}}$ and $\hat{\Thetabf}=\left[\left(\thetabf(i)-\Bar{\thetabf}\right)^\top\right]_{i\in\{1,...,t\}}$, with vertically concatenated vectors, such that we have:

    \begin{align*}
        \norm{\Sigmabf^*-\hat{\Sigmabf}} &\leq \norm{\Sigmabf^*-\hat{\Sigmabf}}_F\\
        &=\frac{1}{t}\norm{\Thetabf^{*\top}\Thetabf^*-\hat{\Thetabf}^\top\hat{\Thetabf}}_F \\
        &= \frac{1}{t}\norm{\Thetabf^{*\top}\Thetabf^*-\Thetabf^{*\top}\hat{\Thetabf}+\Thetabf^{*\top}\hat{\Thetabf}-\hat{\Thetabf}^\top\hat{\Thetabf}}_F \\
        &= \frac{1}{t}\norm{\Thetabf^{*\top}\left(\Thetabf^*-\hat{\Thetabf}\right)+\left(\Thetabf^{*\top}-\hat{\Thetabf}^\top\right)\hat{\Thetabf}}_F\\
        &\leq \frac{1}{t}\left(\norm{\Thetabf^*}_F+\norm{\hat{\Thetabf}}_F\right)\norm{\Thetabf^*-\hat{\Thetabf}}_F
    \end{align*}

    We can further bound this while also taking the expectation, using:

    \begin{equation*}
        \mathbb{E}\left[\norm{\Thetabf^*-\hat{\Thetabf}}_{F}\right] \leq \mathbb{E}\left[\sqrt{t\beta_{d}^2}+\sqrt{\sum_{i=1}^t\norm{\thetabf^*(i)-\thetabf(i)}^2}\right]\leq2\sqrt{t}\beta_{d}.
    \end{equation*}

    where we used the result of \cref{eq:expected_estimated_mean_estimated_bound}.
    The same estimation can be done for the term $\norm{\Thetabf^*}_F+\norm{\hat{\Thetabf}}_F$:

    \begin{equation*}
         \norm{\Thetabf^*}_F+\norm{\hat{\Thetabf}}_F\leq \sqrt{t}\left(\max_i\norm{\thetabf^*(i)-\Bar{\thetabf}^*}+\max_i\norm{\thetabf(i)-\Bar{\thetabf}}\right] \leq 4\sqrt{t}V
    \end{equation*}

    Thus we conclude:

    \begin{equation*}
        \mathbb{E}\left[\norm{\Sigmabf^*-\hat{\Sigmabf}}\right]\leq 8V\beta_{d}
    \end{equation*}

    Inserting the results into \cref{eq:projection_estimation_error} gives:

    \begin{equation}
        \mathbb{E}\left[\norm{\P-\hat{\P}}\right]\leq \sqrt{32p}\frac{8V\beta_{d}+\epsilon_{\Sigmabf}}{\Delta_{\sigma}}
    \end{equation}
    
    After estimation of every term of our original expression we can summarize it by taking the expectation and applying Jensen's inequality:

    \begin{equation}
        \mathbb{E}_{\theta^*\sim\rho}\left[\mathbb{E}\left[\norm{\hat{\P}^{\perp}(\thetabf^*-\Bar{\thetabf})}\right]\right]= \mathcal{O}\left(\sqrt{\mathrm{Var}_{\rho}+\beta_{d}^2\left(1+64\sqrt{2p}\frac{V^2}{\Delta_{\sigma}}\right)^2+\epsilon_{\mubf}^2+\frac{128p\epsilon_{\Sigmabf}^2V^2}{\Delta_{\sigma}^2}}\right)
    \end{equation}  
\end{proof}

\begin{lemma}{\cite[Lemma 11]{abbasi2011improved}}
    Let $\x_{a_k}$ be a sequence in $\Rset^d$ with $\norm{\x_{a_k}}\leq 1$ and $\B$ defined as usual. Then we have:

    \begin{equation*}
        \sum_{k=1}^n \norm{\x_{a_{k-1}}}^2_{\B^{-1}_{k-1}} \leq 2S_{k-1}^{\lambda_1,\lambda_2}
    \end{equation*}
    \label{lemma:sum_x_norms}
\end{lemma}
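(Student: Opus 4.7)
The plan is to adapt the standard elliptical potential argument (Lemma~11 of \citet{abbasi2011improved}) to our generalized regularizer $\B_0 = \lambda_1 \hat{\P}^{\perp} + \lambda_2 \hat{\P}$. The starting point is the rank-one update identity $\B_k = \B_{k-1} + \x_{a_{k-1}} \x_{a_{k-1}}^\top$, to which I would apply the matrix determinant lemma to obtain
\[
\det(\B_k) \;=\; \det(\B_{k-1}) \bigl(1 + \norm{\x_{a_{k-1}}}^2_{\B_{k-1}^{-1}}\bigr),
\]
and then telescope over $k \in \{1,\dots,n\}$ to get
\[
\sum_{k=1}^n \log\!\bigl(1 + \norm{\x_{a_{k-1}}}^2_{\B_{k-1}^{-1}}\bigr) \;=\; \log \frac{\det(\B_n)}{\det(\B_0)}.
\]

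Next, I would convert these logarithms into the desired quadratic forms using the elementary inequality $x \leq 2 \log(1+x)$, valid for $x \in [0,1]$. To invoke it, I need $\norm{\x_{a_{k-1}}}^2_{\B_{k-1}^{-1}} \leq 1$ for every $k$; since $\B_{k-1} \succeq \lambda_1 \hat{\P}^{\perp} + \lambda_2 \hat{\P}$ and $\norm{\x_{a_{k-1}}} \leq 1$, we have $\norm{\x_{a_{k-1}}}^2_{\B_{k-1}^{-1}} \leq 1/\min(\lambda_1,\lambda_2)$, so the assumption $\min(\lambda_1,\lambda_2) \geq 1$ is sufficient (otherwise an inflated but harmless multiplicative constant appears, with the shape of the bound preserved). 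Applying the inequality term by term yields
\[
\sum_{k=1}^n \norm{\x_{a_{k-1}}}^2_{\B_{k-1}^{-1}} \;\leq\; 2 \log\frac{\det(\B_n)}{\det(\B_0)}.
\]

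Finally, since $\B_0 = \lambda_1 \hat{\P}^{\perp} + \lambda_2 \hat{\P}$, the right-hand side is exactly the quantity controlled by \Cref{lemma:det_bound}, giving the claimed bound $2 S_n^{\lambda_1,\lambda_2}$ (interpreting the subscript on $S$ in the statement as the horizon $n$). The only nontrivial point in this plan is the verification that $\norm{\x_{a_{k-1}}}^2_{\B_{k-1}^{-1}} \leq 1$ for the $x \leq 2\log(1+x)$ step; this is the standard artifact of the elliptical potential lemma, and is handled either by imposing $\min(\lambda_1,\lambda_2) \geq 1$ or by absorbing the resulting factor into the overall constant. Every other step is a direct adaptation of the Abbasi-Yadkori--Pál--Szepesvári argument, with $\lambda \I$ replaced by $\lambda_1 \hat{\P}^{\perp} + \lambda_2 \hat{\P}$.
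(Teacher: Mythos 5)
Your proof is correct and is exactly the standard elliptical potential argument of Lemma 11 in \citet{abbasi2011improved}, which is all the paper relies on here: it states the lemma as a citation and gives no proof of its own, so your reconstruction (matrix determinant lemma, telescoping the log-determinants, $x\leq 2\log(1+x)$, then \Cref{lemma:det_bound} to bound $\log\det(\B_n)/\det(\B_0)$) is precisely the intended argument. Your caveat that the step $x\leq 2\log(1+x)$ requires $\norm{\x_{a_{k-1}}}^2_{\B_{k-1}^{-1}}\leq 1$, hence $\min(\lambda_1,\lambda_2)\geq 1$ or an adjusted constant, is a genuine point the paper leaves implicit, since it later sets $\lambda_2=1/V^2$ and $\lambda_1=1/\sqrt{Y}$, which need not exceed $1$.
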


\begin{proof}[Proof of \Cref{theorem:proj_ucb_bound}]
    First we denote $\xi \coloneqq \norm{\x}_{\B^{-1}}\norm{\hat{\thetabf}-\thetabf^*}_{\B}$ as exploration term.
    Then we continue by estimating the peudo regret $R(n)$:
        \begin{align*}
            R(n) &= \sum_{k=1}^n \left(\x_{a_{k-1}^*}-\x_{a_{k-1}}\right)^\top\thetabf^* \\
            & \leq \sum_{k=1}^n\x_{a_{k-1}}^\top\hat{\thetabf}_{k-1} + \xi_{k-1} - \x_{a_{k-1}}^\top\thetabf^* \\
            &\leq \sum_{k=1}^n\x_{a_{k-1}}^\top\hat{\thetabf}_{k-1} + \xi_{k-1} - \x_{a_{k-1}}^\top\hat{\thetabf}_{k-1} + \xi_{k-1} \\
            &= \sum_{k=1}^n 2\xi_{k-1} \\
            &\leq \sum_{k=1}^n \biggl(\sqrt{p\log(\frac{1}{\delta}+\frac{k-1}{p\lambda_2\delta})+q\log(\frac{1}{\delta}+\frac{k-1}{q\lambda_1\delta})}+\sqrt{\lambda_2}V+\frac{\lambda_1}{\sqrt{\lambda_2}}W\biggr) \norm{\x_{a_{k-1}}}_{\B^{-1}_{k-1}}\\
            &\leq \biggl(\sqrt{p\log(\frac{1}{\delta}+\frac{n}{p\lambda_2\delta})+q\log(\frac{1}{\delta}+\frac{n}{q\lambda_1\delta})}+\sqrt{\lambda_2}V+\frac{\lambda_1}{\sqrt{\lambda_2}}W\biggr)\\
            &\sqrt{n\sum_{k=1}^n\norm{\x_{a_{k-1}}}^2_{\B^{-1}_{k-1}}} \\
                &\leq \biggl(\sqrt{p\log(1+\frac{n}{p\lambda_2})+q\log(1+\frac{n}{q\lambda_1})+\log(\frac{1}{\delta^2})} +\sqrt{\lambda_2}V+\frac{\lambda_1}{\sqrt{\lambda_2}}W\biggr)\\
         &\sqrt{2n\biggl(p\log(1+\frac{n}{p\lambda_2})+q\log(1+\frac{n}{q\lambda_1})\biggr)}
        \end{align*}
        The first and second inequality make use of the OFUL principle and the definition of the UCB function.  
        We used Lemma \ref{lemma:confidence_set_bound} in the third inequality and Lemma \ref{lemma:sum_x_norms} in the last inequality. This regret holds with probability $1-\delta$.

        \begin{align*}
            \mathbb{E}_{\thetabf^*\sim\rho}\left[\mathbb{E}\left[R(n)\right]\right] &\leq \left(\sqrt{p\log(1+\frac{nV^2}{p})+q\log(1+\frac{n\sqrt{Y}}{q})+\log(\frac{1}{\delta^2})} +1+V\right)\\
         &\sqrt{2n\left(p\log(1+\frac{nV^2}{p})+q\log(1+\frac{n\sqrt{Y}}{q})\right)}
        \end{align*}
        
        We obtain the final results by setting $\delta=1/n$, take the expectation value, followed by an additional expectation value with respect to the task distribution: $\mathbb{E}_{\thetabf^*\sim\rho}\left[\mathbb{E}\left[R(n)\right]\right]$, setting $\lambda_1=\frac{1}{\sqrt{Y}}$, $\lambda_2=\frac{1}{V^2}$ and application of Jensen's inequality.
    \end{proof}

\section{Proof of \texorpdfstring{\Cref{theorem:proj_ts_regret}}{}}
The following proofs are adapted from \cite{agrawal:linearTS} and are required to finish the proof on the regret bound.
Before proceeding, we define the concept of a saturated arm, which is basically a measurement of the required exploration for any arm. 
\begin{definition}
    We call an arm $a$ saturated if $g_n\norm{\x_a}_{\B^{-1}}<l_n\norm{\x_{a^*}}_{\B^{-1}}$ and unsaturated otherwise, with $g_n=\sqrt{2d+6\log(n)}v+l_n$. The set of saturated arms at round $k$ is denoted as $\mathcal{C}_k$.
\end{definition}

We will also utilize the following Lemma from \cite{hanson_wright_by_hsu}, which is a special case of the inequality in \cite{hanso_wright_1971}:

\begin{lemma}[Proposition 1.1 in \cite{hanson_wright_by_hsu}]
    Let $\x\in\mathbb{R}^d$ be a $d$-dimensional standard normal variable and $\C\in\mathbb{R}^{d\times d}$ a matrix. Then we have for all $t>0$:

\begin{equation*}
    \mathrm{Pr}\left(\norm{\C\x}^2>\tr(\C^{\top}\C)+2\sqrt{\tr(\left(\C^{\top}\C\right)^2)t}+2\norm{\C^{\top}\C}t\right)\leq e^{-t}
\end{equation*}
    \label{lemma:hanson_wright}
\end{lemma}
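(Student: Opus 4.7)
The plan is to diagonalize the quadratic form and then apply a Chernoff bound using the chi-square moment generating function. This is the standard route for Hanson--Wright-type bounds in the purely Gaussian setting; the two-term tail in the statement reflects the two-regime (sub-Gaussian for small $t$, sub-exponential for large $t$) nature of a quadratic form in Gaussians.

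Step one: diagonalize. Setting $M \coloneqq \C^\top\C$ (symmetric PSD with eigenvalues $\lambda_1,\ldots,\lambda_d \geq 0$), rotational invariance of the standard Gaussian gives $\norm{\C\x}^2 = \x^\top M \x \stackrel{d}{=} \sum_{i=1}^d \lambda_i y_i^2$ with $y_1,\ldots,y_d$ i.i.d.\ $\mathcal{N}(0,1)$. Under this spectral reduction, the three quantities in the bound become $\tr(\C^\top\C) = \sum_i \lambda_i$, $\tr((\C^\top\C)^2) = \sum_i \lambda_i^2$, and $\norm{\C^\top\C} = \max_i \lambda_i$, so the task reduces to a one-dimensional tail estimate for a weighted chi-square.

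Step two: bound the centered log-MGF. Using $\mathbb{E}[e^{s\lambda_i(y_i^2-1)}] = e^{-s\lambda_i}(1-2s\lambda_i)^{-1/2}$ together with the elementary estimate $-\tfrac{1}{2}\log(1-u) - u/2 \leq u^2/(4(1-u))$ for $u \in [0,1)$, and summing over $i$ while bounding each denominator by $1 - 2s\norm{M}$, one obtains for $0 < s < 1/(2\norm{M})$
\[
\log \mathbb{E}\bigl[\exp\!\bigl(s(\norm{\C\x}^2 - \tr M)\bigr)\bigr] \leq \frac{s^2 \tr(M^2)}{1 - 2s\norm{M}}.
\]

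Step three: Chernoff and optimize. Setting $u = 2\sqrt{\tr(M^2)\,t} + 2\norm{M}\,t$ and choosing $s = \bigl(2\sqrt{\tr(M^2)/t} + 2\norm{M}\bigr)^{-1}$, a short computation shows the Markov exponent $su - s^2\tr(M^2)/(1-2s\norm{M})$ is at least $t$, which gives the claimed $e^{-t}$ tail. The main obstacle is exactly this choice of $s$: it must be tuned so that both the $\sqrt{\tr(M^2)\,t}$ term and the $\norm{M}\,t$ term appear at the correct scale in the final bound, i.e.\ so that it interpolates between the sub-Gaussian and sub-exponential regimes of a weighted chi-square. Since this is a well-known Gaussian chaos bound, the cleanest presentation is simply to quote Proposition~1.1 of Hsu--Kakade--Zhang, which is exactly what the excerpt does.
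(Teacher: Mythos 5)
The paper never proves this lemma: it imports it verbatim as Proposition~1.1 of the cited reference (itself the Gaussian special case of Hanson--Wright), so your self-contained sketch already does more than the paper, and its structure is the standard and correct one. Your Steps~1 and~2 check out: the spectral reduction to $\sum_i\lambda_i y_i^2$, the identity $\mathbb{E}[e^{s\lambda_i(y_i^2-1)}]=e^{-s\lambda_i}(1-2s\lambda_i)^{-1/2}$, the elementary estimate $-\tfrac12\log(1-u)-u/2\le u^2/(4(1-u))$ on $[0,1)$, and the resulting centered log-MGF bound $s^2\tr(M^2)/(1-2s\norm{M})$ are all valid.

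The one quantitative claim in Step~3 that can be checked is, however, false as stated. Write $v=\tr(M^2)$, $c=\norm{M}$, $u=2\sqrt{vt}+2ct$. Your choice $s=\bigl(2\sqrt{v/t}+2c\bigr)^{-1}=\sqrt{t}/\bigl(2(\sqrt{v}+c\sqrt{t})\bigr)$ gives $su=t$ but $s^2v/(1-2cs)=t\sqrt{v}/\bigl(4(\sqrt{v}+c\sqrt{t})\bigr)$, so the Markov exponent is $t-t\sqrt{v}/\bigl(4(\sqrt{v}+c\sqrt{t})\bigr)$, which is only guaranteed to be at least $3t/4$; you get $e^{-3t/4}$, not $e^{-t}$. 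The correct tuning drops the factor $2$ on the first term: with $s=\bigl(\sqrt{v/t}+2c\bigr)^{-1}=\sqrt{t}/(\sqrt{v}+2c\sqrt{t})$ one finds $su=2t(\sqrt{v}+c\sqrt{t})/(\sqrt{v}+2c\sqrt{t})$ and $s^2v/(1-2cs)=t\sqrt{v}/(\sqrt{v}+2c\sqrt{t})$, whose difference is exactly $t$. With that one-character fix your argument closes and recovers the lemma with the stated constants; as written, it proves a strictly weaker tail.
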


\begin{proof}[Proof of Lemma \ref{lemma:Event_bound}]
    The probability of event $E_r$ is determined using Lemma \ref{lemma:confidence_set_bound}: we have with probability $1-\delta$:
    
    \begin{align*}
        |\x^\top_{a}\left(\thetabf^*-\hat{\thetabf}_k\right)|&\leq \norm{\x_{a}}_{\B^{-1}_{k}}\norm{\thetabf^*-\hat{\thetabf}_k}_{\B_{k}}\\
        &\leq \norm{\x_{a}}_{\B^{-1}_{k}}\left(\sqrt{S_{k}^{\lambda_1,\lambda_2}+\log(\frac{1}{\delta^2})}+\frac{\lambda_2}{\sqrt{\lambda_1}}W+\sqrt{\lambda_1}V\right),
    \end{align*}
    
    by substituting $\delta \xrightarrow[]{}\frac{\delta}{n^2}$ and further upper bounding $S_{k}^{\lambda_1,\lambda_2}$ we get:
    
    
    
    \begin{align*}
        \sqrt{S_{k}^{\lambda_1,\lambda_2}+\log(\frac{n^2}{\delta^2})}&= \sqrt{p\log(1+\frac{k}{p\lambda_2})+q\log(1+\frac{k}{q\lambda_1})+\log(\frac{n^2}{\delta^2})}\\
        &\leq \sqrt{p\log(n\left(\frac{n}{\delta}\right)^{2/d})+q\log(n\left(\frac{n}{\delta}\right)^{2/d})}\\
        &\leq \sqrt{\log(\frac{1}{\delta}) (d+2)\log(n)},
    \end{align*}

    and therefore we have:

    \begin{align*}
        |\x^\top_{a}\left(\thetabf^*-\hat{\thetabf}_k\right)|&\leq\left(\sqrt{\log(\frac{1}{\delta}) (d+2)\log(n)} + \frac{\lambda_2}{\sqrt{\lambda_1}}W+\sqrt{\lambda_1}V\right)\norm{\x_{a}}_{\B^{-1}_{k}}\\
        &\leq \sqrt{2\log(\frac{1}{\delta}) (d+2)\log(n)+2K^2}\norm{\x_{a}}_{\B^{-1}_{k}},
    \end{align*}
    
    with $K=\frac{\lambda_2}{\sqrt{\lambda_1}}W+\sqrt{\lambda_1}V$. Since we substituted $\delta\xrightarrow[]{}\frac{\delta}{n^2}$, this event has a probability of at least $1-\frac{\delta}{n^2}$. \\
    For proof of the bound on the probability of event $E_{\theta}$ we have for all $a\in\mathcal{A}_k$:

    \begin{align*}
        \left|\x_{a}^\top\left(\hat{\thetabf}_k-\Tilde{\thetabf}_k\right)\right| &= \left|\x_{a}^{\top}\B_k^{-\frac{1}{2}}\B_k^{\frac{1}{2}}\left(\Tilde{\thetabf}_k-\Hat{\thetabf}_k\right)\right| \\
        &\leq v \sqrt{\x_{a}^{\top}\B_k^{-1}\x_{a}}\norm{\frac{1}{v}\B_k^{\frac{1}{2}}\left(\Tilde{\thetabf}_k-\Hat{\thetabf}_k\right)}
    \end{align*}.

    By definition, the term $\frac{1}{v}\B_k^{\frac{1}{2}}\left(\Tilde{\thetabf}_k-\Hat{\thetabf}_k\right)$ is $d$-dimensional standard normal variable, such that we can apply \Cref{lemma:hanson_wright}, where we set $\C=\I$ and $t=2\log(n)$:

    \begin{align*}
        \mathrm{Pr}\left(\norm{\frac{1}{v}\B_k^{\frac{1}{2}}\left(\Tilde{\thetabf}_k-\Hat{\thetabf}_k\right)}>\sqrt{d+\sqrt{8d\log(n)}+4\log(n)}\right) \leq \frac{1}{n^2}.
    \end{align*}

    Thus the following inequality holds with probability of at least $1-\frac{1}{n^2}$ for all $a\in\mathcal{A}_k$:

    \begin{align*}
        \left|\x_{a}^\top\left(\hat{\thetabf}_k-\Tilde{\thetabf}_k\right)\right| &\leq v\norm{\x_a}_{\B^{-1}_k}\sqrt{d+\sqrt{8d\log(n)}+4\log(n)}\\
        &\leq v\norm{\x_a}_{\B^{-1}_k}\sqrt{2d+6\log(n)},
    \end{align*}

    where we used the inequality of arithmetic and geometric means in the last step.
\end{proof}

\begin{lemma}
    For any filtration $\mathcal{F}_{k-1}$ such that $E_r$ is true, we have:
    \begin{equation*}
        \mathrm{Pr}(\x_{a^*_k}^\top\Tilde{\thetabf}_k>\x_{a^*_k}^\top\thetabf^*+l_n\norm{\x_{a^*_k}}_{\B^{-1}_k})\geq c_n
    \end{equation*}

    and:
    
    \begin{equation*}
        \mathrm{Pr}(a_k\in\mathcal{C}_k|\mathcal{F}_{k-1})\leq \frac{1}{c_n}\mathrm{Pr}(a_k\notin\mathcal{C}_k|\mathcal{F}_{k-1}) +\frac{1}{c_n n^2},
    \end{equation*}

    with $c_n=\frac{1}{4e\sqrt{\pi n^{\alpha}}}$.
\end{lemma}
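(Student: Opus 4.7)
The plan is to establish the two inequalities sequentially, using Gaussian anti-concentration for the first one and combining it with the $E_\thetabf$ bound of \Cref{lemma:Event_bound} for the second.

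For the first inequality, I would exploit the fact that, conditionally on $\mathcal{F}_{k-1}$, the scalar $\x_{a_k^*}^\top\Tilde{\thetabf}_k$ is Gaussian with mean $\x_{a_k^*}^\top\hat{\thetabf}_k$ and standard deviation $v\norm{\x_{a_k^*}}_{\B_k^{-1}}$. Because $E_r$ is assumed on the given filtration, the inequality $\x_{a_k^*}^\top\hat{\thetabf}_k \geq \x_{a_k^*}^\top\thetabf^* - l_n\norm{\x_{a_k^*}}_{\B_k^{-1}}$ holds, so the target event is implied by
\begin{equation*}
    \frac{\x_{a_k^*}^\top(\Tilde{\thetabf}_k - \hat{\thetabf}_k)}{v\norm{\x_{a_k^*}}_{\B_k^{-1}}} > \frac{2 l_n}{v},
\end{equation*}
whose left-hand side is a standard normal. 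Plugging in the definitions of $l_n$ and $v$ shows that $(2l_n/v)^2 \leq \alpha\log(n)/2$ up to a lower-order contribution from the $K^2$ summand inside $l_n$, and a Gaussian lower-tail bound of the form $\mathrm{Pr}(Z>z)\geq e^{-z^2}/(4\sqrt{\pi})$ then delivers $c_n = 1/(4e\sqrt{\pi n^\alpha})$, where the extra $e^{-1}$ factor absorbs the residual $K$-dependent correction.

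For the second inequality, I would denote by $\mathcal{M}_k$ the event from the first part, so that $\mathrm{Pr}(\mathcal{M}_k\mid\mathcal{F}_{k-1}) \geq c_n$, and combine it with $E_\thetabf$ via a union bound to get $\mathrm{Pr}(\mathcal{M}_k \cap E_\thetabf \mid \mathcal{F}_{k-1}) \geq c_n - 1/n^2$. The key step is then showing that on $\mathcal{M}_k \cap E_\thetabf$ (and using $E_r$) every saturated arm is strictly outscored by $a_k^*$ under the sampled parameter, so that the greedy selection $a_k = \argmax_a \x_a^\top\Tilde{\thetabf}_k$ necessarily produces an unsaturated arm. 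Concretely, for any $a\in\mathcal{C}_k$, the chain
\begin{align*}
    \x_a^\top\Tilde{\thetabf}_k &\leq \x_a^\top\hat{\thetabf}_k + \sqrt{2d+6\log n}\,v\norm{\x_a}_{\B_k^{-1}} \\
    &\leq \x_a^\top\thetabf^* + g_n\norm{\x_a}_{\B_k^{-1}} \\
    &< \x_a^\top\thetabf^* + l_n\norm{\x_{a_k^*}}_{\B_k^{-1}} \\
    &\leq \x_{a_k^*}^\top\thetabf^* + l_n\norm{\x_{a_k^*}}_{\B_k^{-1}} < \x_{a_k^*}^\top\Tilde{\thetabf}_k
\end{align*}
uses $E_\thetabf$, then $E_r$, then the saturation definition, then the optimality of $a_k^*$ for $\thetabf^*$, and finally $\mathcal{M}_k$. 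Consequently $\mathrm{Pr}(a_k\notin\mathcal{C}_k\mid\mathcal{F}_{k-1})\geq c_n - 1/n^2$. Setting $u := \mathrm{Pr}(a_k\notin\mathcal{C}_k\mid\mathcal{F}_{k-1})$, the desired bound is equivalent to $c_n - 1/n^2 \leq u(1+c_n)$, which follows at once from $u \geq c_n - 1/n^2$ and $1+c_n \geq 1$.

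The main obstacle, in my view, lies in the first part: matching the exact constants requires selecting the Gaussian lower-tail bound sharply enough that the lower-order residue coming from the $K^2$ contribution to $l_n$ fits inside the $e^{-1}$ factor hidden in $c_n$, yielding precisely the $4e$ in its denominator. The second part is a careful but routine chain of inequalities, with the only subtlety being the rearrangement that converts the additive bound on $\mathrm{Pr}(a_k\notin\mathcal{C}_k\mid\mathcal{F}_{k-1})$ into the multiplicative form of the statement.
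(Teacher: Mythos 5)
Your proof is correct and follows essentially the same route as the paper: for the first inequality you use the conditional Gaussianity of $\x_{a_k^*}^\top\Tilde{\thetabf}_k$, the $E_r$ bound to reduce the threshold to $2l_n/v$, and the $\frac{1}{4\sqrt{\pi}}e^{-Z^2}$ anti-concentration bound with the $K^2$ residue absorbed into the $e^{-1}$ factor, exactly as in the paper. For the second inequality the paper simply defers to Lemma 3 of Agrawal and Goyal, and your spelled-out argument (union bound with $E_{\thetabf}$, the chain showing every saturated arm is outscored by $a_k^*$ on $\mathcal{M}_k\cap E_{\thetabf}$, and the final algebraic rearrangement) is a correct rendering of that cited proof.
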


\begin{proof}
        Assuming the event $E_r$ holds and $\x_{a^*}^\top\Tilde{\thetabf}$ is a Gaussian random variable with mean $\x_{a^*}^\top\Hat{\thetabf}$ and variance $v\norm{x_{a^*}}_{\B^{-1}}$, we can apply the anti-concentration inequality such that:
        
        \begin{align*}
            \mathrm{Pr}(\x_{a^*_k}^\top\Tilde{\thetabf}_k\geq\x_{a^*_k}^\top\Hat{\thetabf}_k+l_n\norm{\x_{a^*_k}}_{\B^{-1}_k}|\mathcal{F}_{k-1}) &=\mathrm{Pr}\left(\frac{\x_{a^*_k}^\top\left(\Tilde{\thetabf}_k-\Hat{\thetabf}_k\right)}{v\norm{\x_{a^*_k}}_{\B^{-1}_k}}\geq\frac{\x_{a^*_k}^\top\left(\thetabf^*-\hat{\thetabf}_k\right)+l_n\norm{\x_{a^*_k}}_{\B^{-1}_k}}{v\norm{\x_{a^*_k}}_{\B^{-1}_k}}\bigg|\mathcal{F}_{k-1}\right)\\
            &\geq \frac{1}{4\sqrt{\pi}}\exp(-Z^2),
        \end{align*}
    with
        \begin{align*}
            Z &= \frac{\x_{a_k^*}^\top\left(\thetabf^*-\hat{\thetabf}_k\right)+l_n\norm{x_{a^*_k}}_{\B^{-1}_k}}{v\norm{x_{a^*_k}}_{\B^{-1}_k}} \\
            &\leq \frac{2l_n\norm{x_{a^*_k}}_{\B^{-1}_k}}{v\norm{x_{a^*_k}}_{\B^{-1}_k}}\\
            &\leq \frac{2\sqrt{2\log(\frac{1}{\delta}) (d+2)\log(n)+2K^2}}{4\sqrt{\log(\frac{1}{\delta})\frac{d+2}{\alpha}}} \\
            &\leq \sqrt{\frac{\alpha}{2}\log(n)+\frac{\alpha K^2}{8\log(\frac{1}{\delta})(d+2)}} \\
            &\leq \sqrt{\frac{\alpha}{2}\log(n)+1}.
        \end{align*}
        Thus we have 
        
        \begin{equation*}
            \mathrm{Pr}(\x_{a^*_k}^\top\Tilde{\thetabf}_k\geq\x_{a^*_k}^\top\Hat{\thetabf}_k+l_n\norm{\x_{a^*_k}}_{\B^{-1}_k}|\mathcal{F}_{k-1}) \geq  \frac{1}{4e\sqrt{\pi n^{\alpha}}}
        \end{equation*}
        
        The proof of the second inequality is provided in Lemma 3 of \cite{agrawal:linearTS}.
        
    \end{proof}

\begin{lemma}
    Let $\mathrm{regret}_k=(\x_{a^*}^\top-\x_{a_k}^\top)\thetabf^*$ be defined as the instantaneous regret at round $k$ and $\mathrm{regret}'_k=\mathrm{regret}_kI(E_r)$. \\
    Define 
    \begin{equation*}
        X_k = \mathrm{regret'_k} - \frac{g_n}{c_n}I\left(a_k\notin\mathcal{C}\right)\norm{x_{a^*}}_{\B^{-1}_k} - \frac{2g_n}{c_n n^2}- \frac{2g_n^2}{l_n}\norm{x_{a_k}}_{\B^{-1}_k}      
    \end{equation*}
    and
    \begin{equation*}
        Y_k = \sum_{t=1}^k X_k,
    \end{equation*}
    then $(Y_k;k=1,...,n)$ is a super-martingale process with respect to filtration $\mathcal{F}_{k-1}$.
    
\end{lemma}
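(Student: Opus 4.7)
Since $Y_k - Y_{k-1} = X_k$, the super-martingale property reduces to checking $\mathbb{E}[X_k \mid \mathcal{F}_{k-1}] \leq 0$ for every $k$. The filtration $\mathcal{F}_{k-1}$ fixes $\hat{\thetabf}_k$, $\B_k$, $\mathcal{A}_k$, the true parameter $\thetabf^*$, and therefore also $a_k^*$, the saturated set $\mathcal{C}_k$, the norm $\norm{\x_{a^*}}_{\B_k^{-1}}$, and the indicator $I(E_r)$; the only remaining randomness at step $k$ is the Thompson draw $\Tilde{\thetabf}_k$, through which the chosen arm $a_k$ becomes measurable. If $E_r$ fails under $\mathcal{F}_{k-1}$, then $\mathrm{regret}'_k = 0$ by construction, while the three subtracted terms of $X_k$ are non-negative, so $X_k\leq 0$ pathwise. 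Hence I would focus the remainder of the argument on $\mathcal{F}_{k-1}$ for which $E_r$ holds.

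For such a filtration, I would split the conditional expectation of $\mathrm{regret}'_k$ according to $E_{\thetabf}$. On $\neg E_{\thetabf}$ the regret is bounded by the diameter $2V$, and \Cref{lemma:Event_bound} gives $\mathrm{Pr}(\neg E_{\thetabf}\mid\mathcal{F}_{k-1})\leq 1/n^2$, so this contribution is absorbed into the $2g_n/(c_n n^2)$ term of $X_k$. On $E_r\cap E_{\thetabf}$, I would combine the two defining inequalities of those events with the greedy selection rule $\x_{a_k}^\top\Tilde{\thetabf}_k\geq \x_{a^*}^\top\Tilde{\thetabf}_k$ to obtain, in the spirit of the LinUCB analysis,
\begin{equation*}
    \mathrm{regret}'_k \leq g_n\norm{\x_{a^*}}_{\B_k^{-1}} + g_n\norm{\x_{a_k}}_{\B_k^{-1}}.
\end{equation*}

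I would then case-split on whether $a_k$ is saturated. If $a_k\notin\mathcal{C}_k$, the definition of saturation gives $g_n\norm{\x_{a_k}}_{\B_k^{-1}}\geq l_n\norm{\x_{a^*}}_{\B_k^{-1}}$, which upgrades the bound above to $(2g_n^2/l_n)\norm{\x_{a_k}}_{\B_k^{-1}}$, exactly matching the third subtracted term of $X_k$. If $a_k\in\mathcal{C}_k$, saturation conversely implies the bound collapses to a constant multiple of $\norm{\x_{a^*}}_{\B_k^{-1}}$; taking conditional expectation introduces the factor $\mathrm{Pr}(a_k\in\mathcal{C}_k\mid\mathcal{F}_{k-1})$, which by the second part of the preceding lemma is at most $c_n^{-1}\mathrm{Pr}(a_k\notin\mathcal{C}_k\mid\mathcal{F}_{k-1}) + c_n^{-1}n^{-2}$. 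The first piece is exactly the expected value of $(g_n/c_n)\,I(a_k\notin\mathcal{C}_k)\norm{\x_{a^*}}_{\B_k^{-1}}$ (up to constants), while the second piece is absorbed by the $2g_n/(c_n n^2)$ constant. Adding up the three contributions, $\mathbb{E}[\mathrm{regret}'_k\mid\mathcal{F}_{k-1}]$ is dominated by the sum of the three subtracted terms of $X_k$, yielding $\mathbb{E}[X_k\mid\mathcal{F}_{k-1}]\leq 0$.

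The main obstacle is the careful bookkeeping of absolute constants: the definition of $X_k$ prescribes very specific coefficients (for example $g_n/c_n$ rather than $2g_n/c_n$), so the saturated, unsaturated and $\neg E_{\thetabf}$ contributions must be arranged so that each lands inside exactly one subtracted term with the correct prefactor, using the elementary inequalities $l_n\leq g_n$ and $\norm{\x_a}\leq 1$ together with the eigenvalue bound on $\B_k$ coming from $\lambda_2$. Once the case analysis is in place, the rest of the proof is a routine application of \Cref{lemma:Event_bound} and the saturated/unsaturated dichotomy.
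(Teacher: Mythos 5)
Your reconstruction follows exactly the route the paper takes: the paper's entire proof of this lemma is a one-line citation to Lemma 4 of \cite{agrawal:linearTS}, and your decomposition (reduce to $\mathbb{E}[X_k\mid\mathcal{F}_{k-1}]\le 0$, dispose of $\neg E_r$ pathwise since $\mathrm{regret}'_k$ then vanishes, split on $E_{\thetabf}$ and on saturation of $a_k$, and invoke the preceding lemma's bound on $\mathrm{Pr}(a_k\in\mathcal{C}_k\mid\mathcal{F}_{k-1})$) is precisely that argument transcribed into the present notation. The constant bookkeeping you flag is a genuine wrinkle --- the saturated case naturally yields a coefficient $\tfrac{2g_n}{c_n}$ where $X_k$ subtracts only $\tfrac{g_n}{c_n}$, and the $\tfrac{2g_n^2}{l_n}$ in the lemma's definition of $X_k$ even disagrees with the $\tfrac{2g_n^2}{c_n l_n}$ actually used in the subsequent proof of \Cref{theorem:proj_ts_regret} --- but this is an imprecision in the paper's adapted statement rather than a defect of your approach.
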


\begin{proof}
        The proof is provided in Lemma 4 of \cite{agrawal:linearTS}.
\end{proof}

 \begin{proof}[Proof of \Cref{theorem:proj_ts_regret}]
        Each value in $X_k$ is bounded by $\frac{2g_n^{2}}{c_nl_n}$ which implies a bounded difference on the super-martingale $Y_k$ with $|Y_k-Y_{k-1}|\leq\frac{8g_n^{2}}{c_nl_n}$, allowing us to apply Azuma-Hoeffding's inequality during the proof. Thus we have with probability $1-\frac{\delta}{2}$:
        \begin{align*}
            \sum_{k=1}^n\mathrm{regret}'_k&\leq \sum_{k=1}^n \left(\frac{g_n}{c_n}I(a_k\notin \mathcal{C}_k\norm{x_{a^*}}_{\B_k^{-1}})\right)+\frac{2g_n}{c_n n^2}\\
            &+\frac{2g_n^{2}}{c_n l_n}\sum_{k=1}^n\norm{x_{a_k}}_{\B_k^{-1}}+\frac{8g_n^{2}}{c_n l_n}\sqrt{2n\log(\frac{2}{\delta})}\\
            &\leq \sum_{k=1}^n \left(\frac{g_n^{2}}{c_n l_n}I(a_k\notin \mathcal{C}_k\norm{x_{a_k}}_{\B_k^{-1}})\right)+\frac{2g_n}{c_n n^2}\\
            &+\frac{2g_n^{2}}{c_n l_n}\sum_{k=1}^n\norm{x_{a_k}}_{\B_k^{-1}}+\frac{8g_n^{2}}{c_n l_n}\sqrt{2n\log(\frac{2}{\delta})} \\
            &\leq \frac{3g_n^{2}}{c_n l_n}\sum_{k=1}^n\norm{x_{a_k}}_{\B_k^{-1}}+\frac{2g_n}{c_n n^2}+\frac{8g_n^{2}}{c_n l_n}\sqrt{2n\log(\frac{2}{\delta})} \\
            &\leq \frac{3g_n^{2}}{c_n l_n}\sqrt{2nS_{n}^{\lambda_1,\lambda_2}}+\frac{2g_n}{c_n n^2}+\frac{8g_n^{2}}{c_n l_n}\sqrt{2n\log(\frac{2}{\delta})}\\
            &= \frac{g_n^{2}}{c_n l_n}\left(\sqrt{18nS_{n}^{\lambda_1,\lambda_2}}+\sqrt{128n\log(\frac{2}{\delta})}\right)+\frac{2g_n}{c_n n^2} \\
            &= \left(\frac{l_n+2\sqrt{2d+6\log(n)}v+(2d+6\log(n))v^2/l_n}{c_n}\right)\left(\sqrt{18nS_{n}^{\lambda_1,\lambda_2}}+\sqrt{128n\log(\frac{2}{\delta})}\right)+\frac{2g_n}{c_n n^2} \\
            &= \mathcal{O}\left(\left(2\sqrt{\frac{2d^2\log(\frac{1}{\delta})+6d\log(\frac{1}{\delta})\log(n)}{\alpha}}+\frac{2d^{\frac{3}{2}}\sqrt{\log(\frac{1}{\delta})}}{\alpha\sqrt{\log(n)}}+\frac{6\sqrt{d\log(\frac{1}{\delta})\log(n)}}{\alpha}\right)\sqrt{n^{1+\alpha}S_n^{\lambda_1,\lambda_2}}\right)
        \end{align*}
        We used our definition for saturated arms in the second inequality and Lemma \ref{lemma:sum_x_norms} in the fourth inequality.
        Now similar as done in \Cref{theorem:proj_ucb_bound} we set $\delta=\frac{1}{n}$ and take the expectation value. Additionally we set $\alpha=\frac{1}{\log(n)}$:

        \begin{equation*}
            \mathbb{E}\left[\sum_{k=1}^n\mathrm{regret}'_k\right] = \mathcal{O}\left(\left(d^{\frac{3}{2}}\log(n)+\sqrt{d}\log(n)^2\right)\sqrt{nS_{n}^{\lambda_1,\lambda_2}}\right)
        \end{equation*}
        
        Inserting $\lambda_2=1/{V^2}$ and $\lambda_1=\frac{1}{\sqrt{Y}}$, while taking the second expectation value with respect to the task distribution and applying Jensen's inequality, gives the final result:
        
        \begin{align*}
            \mathcal{R}(n)&= \mathcal{O}\biggl(\left(d^{\frac{3}{2}}\log(n)+\sqrt{d}\log(n)^2\right) \sqrt{n\left(p\log(1+\frac{nV^2}{p})+q\log(1+\frac{n \sqrt{Y}}{q})\right)}\biggr)
        \end{align*}
    \end{proof}

\end{document}